\newcommand{\matr}[1]{\mathbf{#1}}
\newcommand{\C}{\mathbf{C}}
\newcommand{\A}{\mathbf{A}}
\newcommand{\B}{\mathbf{\Phi}}
\newcommand{\G}{\mathbf{G}}
\newtheorem{prop}{Proposition}
\newtheorem{lemma}{Lemma}
\newtheorem{proof}{Proof}
\ificcvfinal\pagestyle{empty}\fi
\def\authorBlock{
    Mingze Sun\textsuperscript{1} \quad 
    Shiwei Mao\textsuperscript{1} \quad
    Puhua Jiang\textsuperscript{1, 2} \quad
    Maks Ovsjanikov\textsuperscript{3} \quad
    Ruqi Huang\textsuperscript{1}\thanks{Corresponding author: ruqihuang@sz.tsinghua.edu.cn}\\
    
    \textsuperscript{1}Tsinghua Shenzhen International Graduate School, China \\
    \textsuperscript{2}Peng Cheng Laboratory, China  \\  
    \textsuperscript{3}LIX, \'Ecole polytechnique, IP Paris,
France \\    

}
\begin{document}

%%%%%%%%% TITLE
\title{Spatially and Spectrally Consistent Deep Functional Maps}
\author{\authorBlock}
\maketitle
\pagestyle{empty}  % no page number for the second and the later pages
\thispagestyle{empty}

% Remove page # from the first page of camera-ready.
\ificcvfinal\thispagestyle{empty}\fi

%%%%%%%%% ABSTRACT
\begin{abstract}

Cycle consistency has long been exploited as a powerful prior for jointly optimizing maps within a collection of shapes. In this paper, we investigate its utility in the approaches of Deep Functional Maps, which are considered state-of-the-art in non-rigid shape matching. We first justify that under certain conditions, the learned maps, when represented in the \emph{spectral} domain, are already cycle consistent. Furthermore, we identify the discrepancy that spectrally consistent maps are not necessarily \emph{spatially}, or point-wise, consistent. In light of this, we present a novel design of unsupervised Deep Functional Maps, which effectively enforces the harmony of learned maps under the spectral and the point-wise representation. By taking advantage of cycle consistency, our framework produces state-of-the-art results in mapping shapes even under significant distortions. Beyond that, by independently estimating maps in both spectral and spatial domains, our method naturally alleviates over-fitting in network training, yielding superior generalization performance and accuracy within an array of challenging tests for both near-isometric and non-isometric datasets. Codes are available at \href{https://github.com/rqhuang88/Spatially-and-Spectrally-Consistent-Deep-Functional-Maps}{https://github.com/rqhuang88/Spatially-and-Spectrally-Consistent-Deep-Functional-Maps}.

\end{abstract}

\section{Introduction}\label{sec:intro}

\begin{figure}[t!]
\centering
\includegraphics[width=9cm]{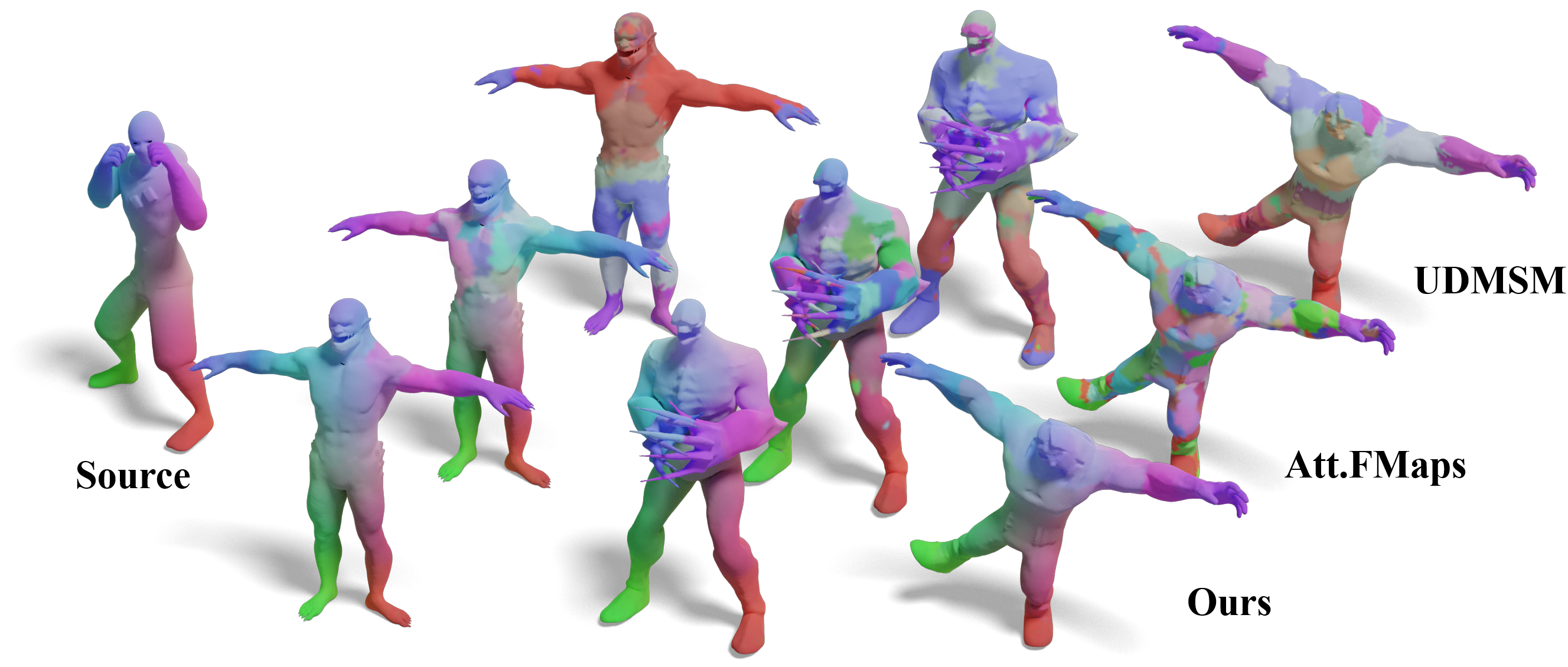}
\caption{We train unsupervised non-rigid shape matching pipelines on the challenging non-isometric dataset DT4D-H, which involves 8 categories of humanoid shapes undergoing significant distortions. Correspondences are visualized by texture transfer. Our method (bottom row) outperforms the state-of-the-art, AttentiveFMaps~\cite{li2022attentivefmaps} (middle row) and UDMSM~\cite{cao2022} (top row), by a large margin. }\label{fig:teaser}
\vspace{-1.5em}
\end{figure}

Non-rigid shape matching is a fundamental task in shape analysis, playing a pivotal role in a wide spectrum of applications including texture transfer~\cite{aigerman2015seamless}, volume parameterization~\cite{rab17}, statistical shape analysis~\cite{scape, bogo2014}, to name a few. In order to establish correspondences between shapes undergoing non-rigid deformations, early approaches~\cite{hks,wks,shot} focus on designing hand-crafted features by exploiting geometric invariance. 
Nowadays, data-driven techniques have been widely adopted to learn features in a more flexible way, leading to significant improvements regarding accuracy, efficiency, and robustness~\cite{suvey2022}. 

A noticeable trend among the learning-based shape matching approaches is based on the formalism of \textbf{Deep Functional Maps} (DFM), pioneered by the FMNet~\cite{litany2017deep}. 
Functional maps~\cite{ovsjanikov2012functional}, as a spectral map representation, allows to encode maps into compact matrices and to express desirable map priors (e.g., area-preservation, isometry, bijectivity) in simple algebraic forms. 
In particular, by learning features that optimize functional map priors, several unsupervised DFM frameworks~\cite{halimi2018self,unsuperise_fmap,CyclicFM20,smoothshells,cao2022,li2022attentivefmaps} have been proposed and, remarkably, achieve even comparable results with respect to the supervised counterparts. 
On the other hand, while the majority of DFM works focus on exploring local, pairwise-level map priors, there is relatively less attention (with the exception of~\cite{cao2022}) paid to the problem of injecting global map priors into DFM pipelines. 

As a global map prior, cycle consistency has long been used as a powerful regularizer for jointly optimizing maps among shapes, both under point-wise~\cite{huang2013consistent} and functional~\cite{funcmap14, huang2020consistent} map representation. 
In this paper, we exploit the utility of cycle consistency within DFM frameworks and propose a novel two-branch unsupervised DFM framework, which promotes cycle consistency in both spectral and spatial domains. 
We first re-examine the generic DFM framework and outline a theoretical condition, based on the residuals of a linear solver used in functional map estimation, that guarantees \emph{spectrally} cycle consistency in DFM over training shapes. 
Then we identify the discrepancy that spectrally consistent maps are not necessarily spatially, or point-wise, consistent. 
In light of this, we leverage our theoretical insight to construct a latent shape in DFM and employ this construction to formulate a novel two-branch design of unsupervised DFM. 
We align each shape's spectral embedding, represented as the eigenbasis of the Laplace-Beltrami operator, to a canonical space given by the constructed universal latent shape. 
We then compute point-wise maps in this canonical embedding domain, which promotes consistency across computed point-wise maps. 
In the end, the point-wise maps are converted to functional maps, which are expected to agree with the ones from the original spectral branch. 
Overall, our two-branch DFM takes advantage of the justified spectrally cycle consistency and further enhances it to spatially cycle consistent. 

We conduct extensive experiments on several non-rigid shape matching benchmarks, and demonstrate that our method achieves superior performance over existing methods, especially in the presence of significant distortions among shapes (see, e.g., Fig.~\ref{fig:teaser}). 
We also observe that our method, by enforcing the harmony of spatial and spectral map representations, reduces over-fitting during training, which leads to remarkable generalization performance within an array of challenging tests.  
Finally, our two-branch design can be easily plugged into any existing DFM framework, and we show evidence that it achieves significant performance gains upon both recent and early DFM approaches~\cite{unsuperise_fmap} with marginal computational burden. 

To summarize, our main contributions are as follows: (1) We perform theoretical analysis on spectrally cycle consistency of DFM frameworks; (2) We formulate a simple yet effective two-branch design of unsupervised DFM based on our theoretical justification, which introduces spatially cycle consistency. (3) We prove the effectiveness of our method through an array of challenging non-rigid shape matching tasks, demonstrating improvements upon existing methods in terms of accuracy, consistency, and generalization performance.

\section{Related Work}
\label{sec:related}
\noindent\textbf{Functional Maps} 
Our method is built upon functional map representation, introduced in~\cite{ovsjanikov2012functional} and then significantly extended in follow-up works (see, e.g., \cite{ovsjanikov2017computing}). The key idea is to encode shape correspondences as transformations between the respective spectral embeddings, which are represented by compact matrices by using reduced eigenbasis. The functional maps pipeline has been further improved in accuracy, efficiency, and robustness by many recent works including \cite{kovnatsky2013coupled,funcmap14,burghard2017embedding,rodola2017partial,commutativity}.

\noindent\textbf{Deep Functional Maps}
In contrast to axiomatic approaches that use hand-crafted features~\cite{hks, wks}, the deep functional maps approach, pioneered by FMNet~\cite{litany2017deep}, aims to \emph{learn} the optimal features from data. FMNet contributes several key designs of DFM: (1) it leverages Siamese network to conduct learning in a set of \emph{shape pairs}; (2) it advocates refining the input descriptors with \emph{non-linear} transformations. FMNet is then supervised by labeled maps to learn optimal features.

Instead of learning from labeled maps, unsupervised approaches~\cite{halimi2018self, unsuperise_fmap} demonstrate that it is sufficient to learn from geometric map priors. 
More recently, with the development of robust mesh feature extractors~\cite{diffusionNet}, more frameworks~\cite{cao2022, li2022attentivefmaps, donati2022deep,attaiki2021dpfm} are proposed to learn directly from geometry, yielding state-of-the-art performance. 

\noindent\textbf{Cycle Consistency}
Cycle consistency has long been used as a strong prior for joint map optimization among a collection of shapes. 
Axiomatic approaches detect and eliminate inconsistent cycles using consistency constraints \cite{huber2002automatic,zach2010disambiguating,nguyen2011optimization, mapcycle,CyclicFM20},
 as well as associate cycle consistency with low-rank properties of matrices encoding map networks \cite{huang2013consistent,wang2013exact,leonardos2017distributed}. 

Related to the latter, the matrix nature of the functional maps enables convenient access to map composition, which naturally bridges the functional map framework and consistent map refinement techniques \cite{wang2013,funcmap14,limitshape,shoham2019hierarchical}. Some recent learning-based approach~\cite{huang_multiway_2022} also incorporate cycle consistency in the pipeline. 
It is worth noting, though, the above works all utilize cycle consistency as a prior for \emph{test-time optimization}, which depends on test shape collection and initial maps. Contrastingly, our approach exploits cycle consistency during \emph{training} to boost feature learning and poses no constraint on a test. 

From this viewpoint, UDMSM~\cite{cao2022} is perhaps the most relevant work to ours, as both construct a latent shape during training to inject cycle consistency.  
The key difference between their work and ours, though, is how the latent shape is constructed. In~\cite{cao2022}, the authors propose to construct a  universal shape in the learned feature space and establish point-wise maps from real shapes to the universal one by training a classifier. As shown in Sec.\ref{sec:exp}, though this construction leads to great performance on matching near-isometric shapes, it suffers from large shape variability in mapping non-isometric ones. 
On the other hand, we leverage spectral information in estimating point-wise maps. As a result, our approach benefits from more direct usage of intrinsic geometric information encoded in the spectral embedding, yielding better generalization performance.

\noindent\textbf{Dual Map Representations}
Thanks to the inherent connection and efficient conversion between point-wise and functional maps, it has long been observed that jointly estimating both map representations can improve the mapping quality. For instance, in the original work~\cite{ovsjanikov2012functional}, the authors have already proposed to apply an ICP-like technique on functional maps. 
More recent advances take advantage of the multi-scale properties of the eigenbasis of the Laplace-Beltrami operator. In the works~\cite{zoomout, huang2020consistent, smoothshells, eisenberger2020deep}, conversions are done between spatial domain and a series of spectral domains spanned by eigenfunctions of increasing dimensions. This idea has also been incorporated into DFM. In AttentiveFMaps~\cite{li2022attentivefmaps}, the authors propose to fuse functional maps of different dimensions by converting them into the common spatial domain. This fusion technique in turn allows training an attention network to dynamically choose the optimal spectral resolution. 

The above methods enforce consistency across map representations by iterative projections, which is computationally heavy. By exploiting our theoretical insight on the spectral cycle consistency of DFM, we only introduce a marginal computational overhead to a standard DFM trained on a single spectral resolution. Our method is much lighter and simpler but also shows superior performance in accuracy and generalization. 

\section{Cycle Consistency of Deep Functional Maps}\label{sec:ccfmap}
In this section, we first briefly review the general deep functional maps pipeline. Then we present a theoretical analysis of its cycle consistency. 

\subsection{Deep Functional Maps}\label{sec:dfm}
We assume to be given a pair of deformable shapes $S_1$ and $S_2$, which are discretized as triangular meshes of $n_1$ and $n_2$ vertices, respectively. The generic deep functional map pipeline introduced in \cite{litany2017deep} learns a map that spectrally aligns the shapes through the following four steps:

\begin{enumerate}
	\setlength\itemsep{-0.5em}
	\item Compute the leading $k$ eigenfunctions of the Laplace-Beltrami operator on each shape, which can be treated as a high-dimensional spectral embedding of the respective shape. The eigenfunctions are stored as matrices $\B_i \in \mathbb{R}^{n_i \times k}, i = 1, 2$.
	\item Instantiate a feature extractor network, $\mathcal{F}_{\Theta}$, where $\Theta$ denotes the set of learnable parameters. By feeding forward shapes through $\mathcal{F}_{\Theta}$, descriptors are obtained and expected to be approximately preserved by the underlying map. In general, we denote $\G_i = \mathcal{F}_{\Theta}(S_i) \in \mathbb{R}^{n_i\times d}, i = 1, 2$, where $d$ is the predefined number of descriptors. They are then projected onto the eigenbasis above, resulting in a couple of coefficient matrices $\A_i =\B_i^{\dagger} \G_i\in \mathbb{R}^{k\times d}, i = 1, 2$. 
	\item Estimate the optimal functional map, ${\C}^*$, by solving the following linear system:
		\begin{equation}
			{\C}^* = {\arg\min}_{\C} E_{\mbox{desc}}(\C) + E_{\mbox{reg}}(\C),	
		\end{equation}

		where $E_{\mbox{desc}}(\C) = \Vert \C \A_1 - \A_2\Vert^2$, and $E_{\mbox{reg}}(\C)$ is the regularization term promoting structural properties of $\C$, e.g., enforcing $\C$ to be commutative with the Laplace-Beltrami operators and to be orthogonal. In particular, we let $E_{\mbox{total}}(\C) = E_{\mbox{desc}}(\C) + E_{\mbox{reg}}(\C)$.
	\item Convert the estimated functional map $\C$ to a point-wise map by conducting nearest neighbor search between the rows of $\B_1 \C$ and that of $\B_2$. 
\end{enumerate}

\begin{figure}[t!]
    \centering
    \includegraphics[width=0.47\textwidth,height=0.18\textwidth]{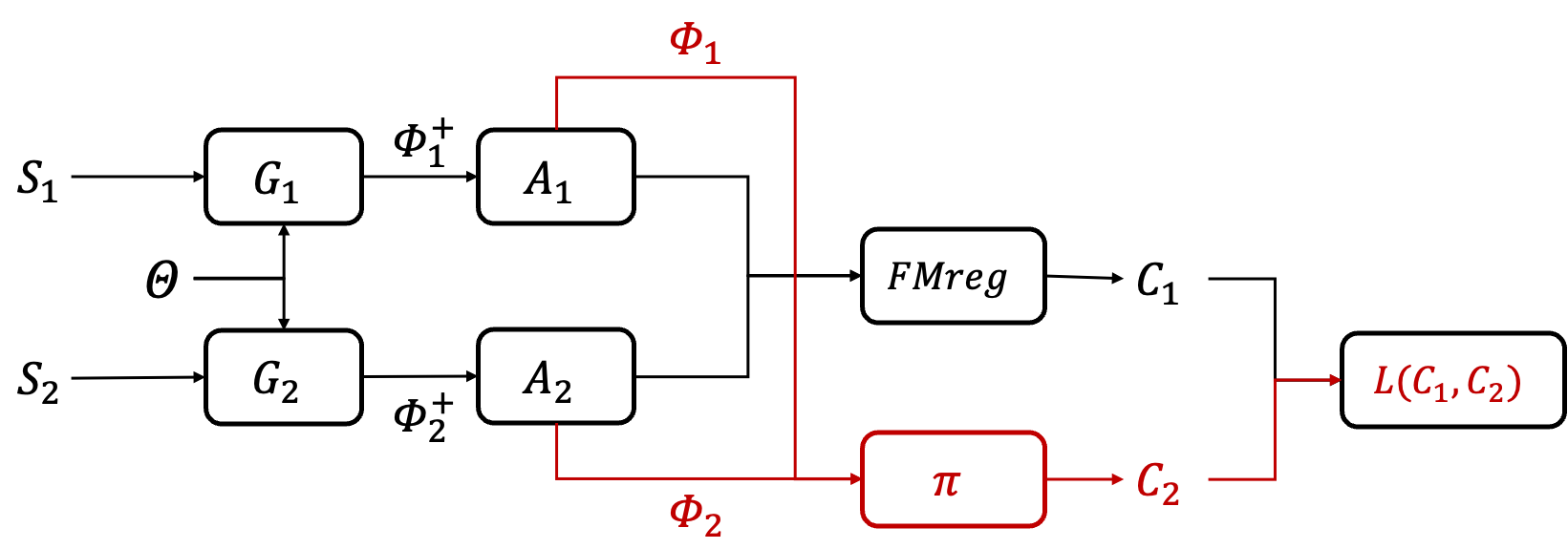}
    \caption{Overview of our two-branch DFM network. The part colored in black corresponds to a standard DFM, we introduce a novel branch, colored in red, that estimates maps from spatial perspective, our loss function is defined as Eqn.(\ref{eqn:loss}). $\B_i, \B_i^{\dagger}$ indicate projection of the regarding features into the spatial and spectral domain, respectively. See more details in Sec.~\ref{sec:net}. }\label{fig:pipeline}
    \vspace{-1.5em}
\end{figure}

The feature extractor is then trained on a set of training shape pairs, which typically enumerates all pairs from some given training set. 
The sub-optimization problem in Step (3) is generally decomposed into two parts to circumvent using an iterative solver. 
Namely, a closed-form solution is obtained either by a least-square estimation w.r.t $E_{\mbox{desc}}(\C)$~\cite{litany2017deep}, or by solving a sequence of linear systems involving Laplacian commutativity as well~\cite{donati2022deep}. Both correspond to the \emph{FMreg} layer shown in Fig.~\ref{fig:pipeline}. The rest of $E_{\mbox{reg}}(\C)$ is then set as the structural loss, corresponding to $\mathcal{L}(\C)$ in Fig.~\ref{fig:pipeline}.

\subsection{Cycle Consistency}\label{sec:theo}
We start by giving a formal definition of cycle consistency. We assume to be given a collection of shapes $\mathcal{S} = \{S_i\}_{i = 1}^n$ and a set of maps $\mathcal{T} = \{T_{ij}\}_{i, j\in [1..n]}$. We call $\mathcal{T}$ cycle consistent if for any shape $S_i$, and any closed path $(i, i_1, i_2, \cdots, i_k, i)$, the map composition along this path $T_{ii} = T_{i_k i}\circ \cdots T_{i_1 i_2} \circ T_{i i_1}$ is an identity map on $S_i$. 
We can similarly define cycle consistency on functional maps, the only difference is that map composition is given by matrix multiplication and we require the final functional map $C_{ii}$ to be an identity matrix. 

We suppose that deep functional maps are trained on $\mathcal{S}$ with respect to all possible pairs. Then the global energy is given by $E_{\mbox{total}}(\mathcal{C}) = \sum_{i, j} \Vert \C_{ij} \A_i - \A_j \Vert^2 + \sum_{i, j} E_{\mbox{reg}}(\C_{ij})$, where $\mathcal{C} = \{\C_{ij}\}_{i, j\in [1..n]}$ is the set of functional maps among training shapes.

\begin{prop}
	\label{prop:main}
	 If $E_{\mbox{total}}(\mathcal{C}) = 0$, then for any shape $S_i$, and any path $(i, i_1, i_2, \cdots, i_k, i)$, the map composition $\C_{ii}$ is cycle consistent within the functional space spanned by the columns of $\A_i$, i.e., $\C_{ii} \A_i = \A_i$. 
\end{prop} 

As a consequence of Prop.~\ref{prop:main}, when $\A_i\in \mathbb{R}^{k\times d}$ is of full row rank, then $\mathcal{C} = \{\C_{ij}\}_{i, j\in [1..n]}$ is cycle consistent. In practice, we generally set $d > k$. Moreover, during network training, $\C_{ij}$ is computed via pseudo-inverse, which implicitly assumes the full-rankness of $\A_i, \A_j$. 
We refer readers to the empirical validation in Sec.~\ref{sec:net}. 

We defer the proof of Prop.~\ref{prop:main} to Supp. Material. In fact, a similar claim has been formulated and proven in~\cite{wang2013} (see Sec.~\ref{sec:dfm} therein), but in the context of map refinement via promoting cycle consistency. 

Though being technically similar, the theoretical argument of~\cite{wang2013} and that of ours have fundamentally different implications. More specifically, the former justifies a \emph{test-time optimization} algorithm, which is used to promote cycle consistency of maps among a \emph{fixed} test shape collection. While the latter suggests that spectral cycle consistency has been ensured and further leveraged to enhance the \emph{universal} feature extractor (independent of the test data) during training in \emph{any} DFM framework following the generic pipeline presented in Sec.~\ref{sec:dfm}.

\section{Two-branch Deep Functional Maps}\label{sec:dualbranch} 

It has long been recognized, both theoretically and empirically, that optimizing purely in the spectral domain is not sufficient. As a toy example, a trivial solution attaining global optima can be constructed as follows: Suppose that we have learned a feature extractor $\mathcal{F}_{\Theta}$, which returns the respective eigenbasis transformed by a universal $\A_0$. That is, $\G_i = \B_i\A_0, \forall i$, which implies $\A_i = \B_i^{\dagger} \G_i = \A_0, \forall i$. Then we have $\C_{ij} \equiv I_k$, which exactly satisfies $E_{\mbox{desc}}(\C_{ij}) = E_{\mbox{reg}}(\C_{ij}) = 0, \forall i, j$. However, it probably induces poor point-wise maps. 

In fact, in~\cite{ovsjanikov2012functional} the authors have proposed to use an ICP-like technique to encourage the estimated functional maps to be induced by some point-wise maps. In~\cite{zoomout}, the authors propose a spectral upsampling method for map refinement, which essentially converts maps back and forth between spectral and spatial domains. Moreover, the following lemma from~\cite{zoomout} sheds light on the necessity of taking both spectral and spatial representations into consideration. 

\begin{lemma}\label{lemma:iso}
	Given a pair of shapes $S_1, S_2$ each having non-repeating Laplacian eigenvalues, which are the same. A point-wise map $ T: S_1\rightarrow S_2$ is an isometry if and only if the corresponding functional map $\C$ in the complete Laplacian basis is both diagonal and orthonormal. 
\end{lemma}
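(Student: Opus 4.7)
The plan is to exploit how the functional map $\C$ encodes the pullback operator $T^{*}: L^{2}(S_{2}) \to L^{2}(S_{1})$ in the Laplace--Beltrami eigenbases, and to read off the two algebraic conditions (diagonality, orthonormality) from the two geometric ingredients of a Riemannian isometry (commutation with the Laplace--Beltrami operator, and area preservation).

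For the forward direction, I would first recall that a Riemannian isometry $T$ induces a pullback that commutes with the Laplace--Beltrami operator, $\Delta_{1}\circ T^{*} = T^{*}\circ \Delta_{2}$, and that preserves the $L^{2}$ inner product because $T$ is area-preserving. Writing the first identity in the respective eigenbases, with the common eigenvalue diagonal $\Lambda_{1}=\Lambda_{2}=\Lambda$, produces the matrix relation $\Lambda\C = \C\Lambda$. The non-repeating eigenvalue hypothesis makes the diagonal entries of $\Lambda$ pairwise distinct, so this identity forces $\C$ to be diagonal. The $L^{2}$-isometric nature of $T^{*}$ then forces $\C$ to be orthonormal, hence with diagonal entries in $\{\pm 1\}$.

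For the backward direction, I would start from the assumption that $\C$ is diagonal and orthonormal in the complete eigenbasis. Diagonality says that $T^{*}$ sends every eigenfunction $\phi_{j}^{(2)}$ to a scalar multiple of $\phi_{j}^{(1)}$ with the \emph{same} eigenvalue, so by linearity and $L^{2}$-density of the eigenbasis this extends to $T^{*}\circ \Delta_{2} = \Delta_{1}\circ T^{*}$ on smooth functions. Orthonormality forces the diagonal entries to be of unit modulus, whence $T^{*}$ is an $L^{2}$-isometry and $T$ is area-preserving. One then closes the argument by invoking the classical spectral rigidity fact that a diffeomorphism whose pullback intertwines the Laplace--Beltrami operators and preserves area is necessarily a Riemannian isometry.

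The hard part will be the converse direction: one must verify that the purely algebraic identities extend from the countable eigenbasis to all smooth test functions, and that intertwining the Laplace--Beltrami operators jointly with area preservation is indeed sufficient to reconstruct the metric. An alternative that avoids an external rigidity theorem is to compute the Dirichlet energy $\int_{S_{1}} \Vert\nabla T^{*} f\Vert^{2}$ directly via the spectral expansion, show it equals $\int_{S_{2}} \Vert\nabla f\Vert^{2}$ using diagonality and unit-modulus of $\C$, and then recover preservation of the metric tensor by polarization and the fact that Dirichlet energy determines the Riemannian metric on a connected manifold.
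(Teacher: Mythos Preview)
The paper does not prove this lemma; it is quoted verbatim from \cite{zoomout} and used only as motivation for why both spectral and spatial representations should be controlled. There is therefore no in-paper proof to compare against.

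Your proposal follows the standard argument one finds in that reference and related literature: for the forward direction, intertwining of the Laplace--Beltrami operators gives $\Lambda\C=\C\Lambda$, simple spectrum forces diagonality, and area preservation gives orthonormality; for the converse, diagonality with unit-modulus entries yields preservation of the Dirichlet energy on the full eigenbasis, hence preservation of the metric. This is essentially the approach taken in \cite{zoomout}. The one place to be careful is the regularity of $T$ in the converse: the lemma speaks of a ``point-wise map'' without assuming smoothness or bijectivity a priori, and your Dirichlet-energy argument tacitly needs $T$ to be at least a diffeomorphism so that $T^{*}$ acts on smooth functions and the polarization step recovers the metric tensor. In the discrete/functional-maps setting this is usually finessed by working with the complete eigenbasis and invoking that an invertible $\C$ forces $T$ to be a bijection, but if you want a clean continuous statement you should state the smoothness hypothesis explicitly.
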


The above lemma suggests that apart from promoting the structural properties of functional maps, it is also critical to enforce them to be associated with certain point-wise maps, or termed as \emph{properness} of functional maps in~\cite{proper}.

Finally, we remark that some recent DFM advances also promote the properness of the resulting spectral maps. For instance, AttentiveFMaps~\cite{li2022attentivefmaps} follows the spirit of ZoomOut~\cite{zoomout} and explicitly performs a conversion between spectral and spatial map representations across different dimensions of eigenbasis; UDMSM~\cite{cao2022} constructs explicitly a universal shape in the feature space, and enforce the spectral map estimation to be consistent with the spatial maps induced via the universal shape.  

\subsection{Two-branch Map Estimation}\label{sec:2branch}
In this part, we leverage our observation made in Prop.~\ref{prop:main} and propose a novel, simple yet effective design of unsupervised deep functional maps, which introduces a new branch that independently estimates maps from spatial perspective. 

Our key insight is that, once cycle consistency is valid and $\A_i$ is of full row rank, $\A_i$ can be seen as a functional map from a universal \emph{latent shape}, $S_0$, to $S_i$. This perspective has been explored in several prior works~\cite{wang2013,limitshape,huang2020consistent}, we provide the following details to be self-contained. 
The above assumption implies $\C_{ij} = \A_j \A_i^{\dagger}$. Then $\C_{ij}$ can be interpreted as a functional map composition from $S_i$ to $S_0$, followed by a map from $S_0$ to $S_j$. 

On the other hand, one can align the spectral embeddings of $S_j$ to that of $S_i$ by simply transforming the former by $\C_{ij}$. 
Indeed, we convert $\C_{ij}$ into the point-wise map by the nearest neighbor searching between the rows of $\B_j \C_{ij}$ and that of $\B_i$.  
From this point of view, denoting the virtual spectral embedding of the latent shape by $\B_0$, $\B_i\A_i$ can be then treated as the spectral embedding of $S_i$ aligned to that of $S_0$. 
Therefore, given a pair of shapes $S_i, S_j$, since we have aligned their eigenbasis to the canonical frame defined by the virtual spectral embedding $\B_0$, we can align the spectral embedding of $S_i$ to $\B_0$ by computing $\B_i \A_i$. Once all the spectral embeddings are aligned to the canonical embedding domain regarding $\B_0$, we can compute the soft point-wise map between $S_i$ and  $S_j$ by nearest neighbor searching between the rows of  $\B_i \A_i$ and those of  $\B_j\A_j$. 

Based on the above derivation, given the learned features projected in the spectral domain, $\A_i, \A_j$, and a pair of indices $p \in [1..n_i], q\in [1..n_j]$, we can compute point-wise maps. Firstly we compute residual:
\begin{equation}\label{eqn:delta}
	\delta_{qp} = \Vert \B_i[p]\A_i - \B_j[q]\A_j \Vert_2,
\end{equation} 
where $\B_i[p]$ denotes the $p$-th row of $\B_i$, and similarly we define $\B_j[q]$. 
The soft point-wise map $\Pi \in \mathbb{R}^{n_j \times n_i}$ is then given by:
\begin{equation}\label{eqn:pi}
	\Pi(q, p) = \frac{\exp(-{\alpha\delta_{qp}})}{\sum_{p'} \exp(-\alpha\delta_{qp'})}.	
\end{equation}
Note that by construction, each row of $\Pi$ is non-negative and sums up to $1$, forming a probability distribution. The parameter $\alpha$ controls the entropy of each distribution -- the smaller/larger $\alpha$ is, the fuzzier/sharper the distribution is. Instead of manually tuning the optimal $\alpha$, we propose a learning scheme that dynamically controls $\alpha$ over training, which is inspired by curriculum learning~\cite{clearn}. We defer the respective details to Sec.~\ref{sec:alpha}. 

We convert the soft point-wise map to a functional map by
\begin{equation}\label{eqn:cc}
	\C_2 = \B_j^{\dagger} \Pi \B_i.
\end{equation}
In the end, we enforce $\C_2$ to be consistent with $\C_1$, the intermediate output from the \emph{FMreg} layer. 

To summarize, thanks to the spectral cycle consistency we identify in Sec.~\ref{sec:theo}, we are allowed to construct a spectral latent shape and induce spatial maps via it. By enforcing the spatial estimations to be consistent with the spectral ones, we obtain a spatially and spectrally consistent deep functional maps framework.

\begin{figure}[t!]
    \centering
    \includegraphics[width=7.5cm]{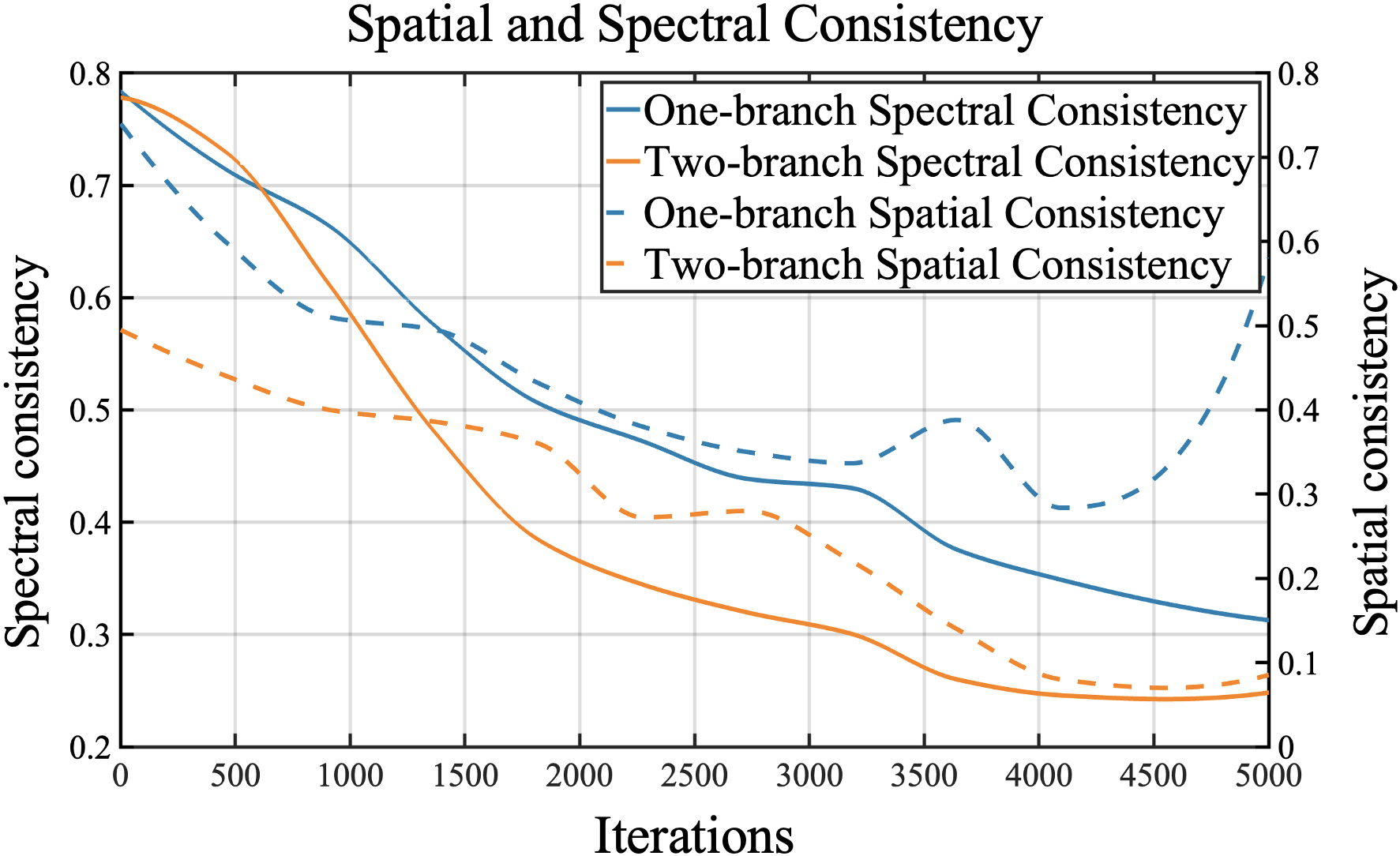}
    \caption{We train our two-branch DFM and a vanilla single-branch version on \textbf{DT4D-H} and monitor spectrally and spatially cycle consistency along the training.}
    \label{fig:theroy}
    \vspace{-1.5em}
\end{figure}

\subsection{Network Design}\label{sec:net}
As shown in Fig.~\ref{fig:pipeline}, our two-branch network is built upon a standard DFM framework. 
In the following, we denote by $\C_1$ and $\C_2$ the estimated functional maps from the original \emph{FMreg} layer and our novel branch, respectively. 

Specifically, we use DiffusionNet~\cite{diffusionNet} as our feature extractor. And WKS~\cite{wks} descriptors are fed into it as initialization of learned features. 
We borrow the \emph{FMreg} layer from~\cite{donati20}. It takes both $E_{\mbox{desc}}(\C)$ and commutativity with the Laplace-Beltrami operator into consideration, where the latter is given as:
\begin{equation}
	L_{\mbox{lap}} = \big\Vert \C_{1} {\Lambda}_1 - {\Lambda}_2 \C_{1} \big\Vert^2,
\end{equation}
where $ {\Lambda}_1$ and $ {\Lambda}_2$ are diagonal matrices of the Laplace-Beltrami eigenvalues on the two shapes.

The estimation of $\C_2$ has been described in detail in Sec.~\ref{sec:2branch}. 
In the end, we formulate the training loss as:
\begin{equation}\label{eqn:loss}
	\mathcal{L}(\C_1, \C_2) = \| \C_{1}^T \C_{1} - \matr{I}\|^2 + \Vert \C_1 - \C_2\Vert ^2,
\end{equation}
where the first term promotes the orthogonality of $\C_1$, while the second term promotes the consistency between the functional maps estimated from different branches.

Finally, we remark that by combining the \emph{FMreg} layer and $\mathcal{L}(\C_1, \C_2)$, we have incorporated every factor in Lemma~\ref{lemma:iso} into our design.

\noindent\paragraph*{Conceptual Validation}
In this part, we train a network on DT4D-H dataset (see Sec.~\ref{sec:data} for details) with our two-branch network, and a single-branch variant without our spatial map estimation branch. 
We monitor and plot the following quantities along training: (1) Average spectral cycle consistency over sampled triplets, i.e., $\frac{1}{M} \sum_{(i, j, k)} \Vert \C_{ki}\C_{jk}\C_{ij} - I \Vert^2/\Vert I \Vert^2$; (2) Average spatial cycle consistency over sampled triplets, i.e., the mean Euclidean deviation from composed maps $T_{ki} \circ T_{jk} \circ T_{ij}$ to identity map on $S_i$. Here $n = 80$ is the number of training shapes, and $M = 1000$ is the number of sampled triplets. The behavior of the blue curves after 4500 iterations verifies our argument that spectrally cycle consistency does not imply spatially cycle consistency. On the other hand, by introducing our two-branch design, the discrepancy is well compensated and evidently better cycle consistencies in both spatial and spectral domains are achieved. 

\subsection{Updating Scheme of $\alpha$ in Eqn.~(\ref{eqn:pi})}\label{sec:alpha}
The soft point-map conversion (Eqn.~\ref{eqn:pi}) has been applied in several prior works~\cite{litany2017deep, li2022attentivefmaps}, which all set $\alpha$ to be a manually selected constant. Ideally, we expect $\Pi$ in Eqn.~\eqref{eqn:pi} to be close to a permutation matrix, i.e., each row forms a binary vector. 
This seems to suggest a preference for a large $\alpha$.
Unfortunately, it would severely hinder network training, since the learned features and maps are of low quality in the early stage. On the other hand, a small $\alpha$ can alleviate such difficulty but falls short of fully pushing functional maps to be proper. As demonstrated in Sec.~\ref{sec:ablation}, neither small nor large $\alpha$ produces satisfying results. 

Based upon the above analysis, we propose a novel updating scheme, which is inspired by curriculum learning~\cite{clearn}. Namely, we initiate a small $\alpha$ at the beginning of network training and increase it by a constant step size for every fixed number of epochs. As shown in Tab.~\ref{table:1},~\ref{table:2},~\ref{table:3}, our scheme does not rely on hyperparameter tuning but also achieves state-of-the-art results. 

\subsection{Implementation Details} 
We implement our network with PyTorch~\cite{pytorch}. We use four DiffusionNet blocks~\cite{diffusionNet} as feature backbone and borrow the functional map block with Laplacian regularizer from~\cite{donati20}. The dimension of the Laplace-Beltrami eigenbasis is set to 50. WKS~\cite{wks} descriptors are used as the input signal to our network. The dimensions of the input and the output descriptors are both set to 128. During training, the value of the learning rate is set to 2e-4 with ADAM optimizer. In all experiments, we train our method for 10,000 iterations with a batch size of 1. Following the learning strategy in Sec.~\ref{sec:alpha}, we initialize $\alpha$ to $1$ and increase it by $5$ per epoch. As indicated in Eqn.~\eqref{eqn:loss}, We weigh equally the orthogonality loss with respect to $\C_1$ and the residual between $\C_1$ and $\C_2$. More implementation details are provided in the Supp. Material.

\begin{table*}[t!]
\caption{Mean geodesic errors (×100) on FAUST\_r, SCAPE\_r, and SHREC19\_r. The \textbf{best} and the \textbf{\textcolor{blue}{second best}} are highlighted. }\label{table:1}
\centering
\setlength{\tabcolsep}{11pt}
\resizebox{\textwidth}{34mm}{
\begin{tabular}{ccccccccc}
\hline
\rowcolor[HTML]{FFFFFF} 
\cellcolor[HTML]{FFFFFF}                         & Train                                                  & \multicolumn{3}{c}{\cellcolor[HTML]{FFFFFF}FAUST\_r}                                                            & \multicolumn{1}{l}{\cellcolor[HTML]{FFFFFF}} & \multicolumn{3}{c}{\cellcolor[HTML]{FFFFFF}SCAPE\_r}                                                            \\ \cline{3-5} \cline{7-9} 
\rowcolor[HTML]{FFFFFF} 
\multirow{-2}{*}{\cellcolor[HTML]{FFFFFF}Method} & Test                                                   & FAUST\_r                            & SCAPE\_r                            & SHREC19\_r                             &                                              & SCAPE\_r                            & FAUST\_r                            & SHREC19\_r                             \\ \hline
\rowcolor[HTML]{FFFFFF} 
ZM\cite{zoomout}                                                &                                                        & 6.1                                 & \textbackslash{}                    & \textbackslash{}                    &                                              & 7.5                                 & \textbackslash{}                    & \textbackslash{}                    \\
\rowcolor[HTML]{FFFFFF} 
BCICP\cite{BCICP}                                             &                                                        & 6.4                                 & \textbackslash{}                    & \textbackslash{}                    &                                              & 11.0                                & \textbackslash{}                    & \textbackslash{}                    \\

\rowcolor[HTML]{FFFFFF} 
IsoMuSh\cite{gao2021isometric}                                             &                                                        & 4.4                                 & \textbackslash{}                    & \textbackslash{}                    &                                              & 5.6                                & \textbackslash{}                    & \textbackslash{}                    \\

\rowcolor[HTML]{FFFFFF} 
Smooth Shell\cite{smoothshells}                                      &                                                        & 2.5                                 & \textbackslash{}                    & \textbackslash{}                    &                                              & 4.7                                 & \textbackslash{}                    & \textbackslash{}                    \\
\rowcolor[HTML]{FFFFFF} 
CZO\cite{huang2020consistent}                                             &                                                        & 2.2                                 & \textbackslash{}                    & \textbackslash{}                    &                                              & 2.5                                & \textbackslash{}                    & \textbackslash{}                    \\

\hline

\rowcolor[HTML]{FFFFFF} 
TransMatch\cite{trappolini2021shape}                                        & \cellcolor[HTML]{FFFFFF}                               & 2.7                                 & 33.6                                & 21.0                                &                                              & 18.3                                & 18.6                                & 38.8                                \\
\rowcolor[HTML]{FFFFFF} 
GeomFMaps\cite{donati20}                                        & \cellcolor[HTML]{FFFFFF}                               & {\textbf{\textcolor{blue}{2.6}}}                                 & {\textbf{\textcolor{blue}{3.3}}}                                & {\textbf{\textcolor{blue}{9.9}}}                                &                                              & {\textbf{\textcolor{blue}{3.0}}}                                & {\textbf{\textcolor{blue}{3.0}}}                                & \textbf{12.2}                                \\
\rowcolor[HTML]{FFFFFF} 
AttentiveFMaps\cite{li2022attentivefmaps}                                            & \multirow{-3}{*}{\cellcolor[HTML]{FFFFFF}supervised}   & \textbf{1.4}                                & \textbf{2.2}                                 & \textbf{9.4}                                 &                                              & \textbf{1.7}                                 & \textbf{1.8} & \textbf{12.2}                                \\ \hline

\rowcolor[HTML]{FFFFFF} 
NeuroMorph\cite{eisenberger2021neuromorph}                                       & \cellcolor[HTML]{FFFFFF}                               & 8.5                                 & 28.5                                & 26.3                                &                                              & 29.9                                & 18.2                                & 27.6                                \\
\rowcolor[HTML]{FFFFFF} 
SyNoRiM\cite{multi}                                       & \cellcolor[HTML]{FFFFFF}                               & 7.9                                 & 21.7                                & 25.5                                &                                              & 9.5                                & 24.6                                & 26.8                                \\

\rowcolor[HTML]{FFFFFF} 
Deep Shell\cite{eisenberger2020deep}                                        & \cellcolor[HTML]{FFFFFF}                               & {\textbf{\textcolor{blue}{1.7}}} & 5.4                                 & 27.4                                &                                              & 2.5                                 & 2.7                                 & 23.4                                \\
\rowcolor[HTML]{FFFFFF} 
AttentiveFMaps\cite{li2022attentivefmaps}                                   & \cellcolor[HTML]{FFFFFF}                               & 1.9                                 & \textbf{2.6} & 6.4 &                                              & {\textbf{\textcolor{blue}{2.2}}} & \textbf{\textcolor{blue}{2.2}}                        & 9.9                                 \\
\rowcolor[HTML]{FFFFFF} 
UDMSM\cite{cao2022}                                            & \cellcolor[HTML]{FFFFFF}                               & \textbf{1.5}                        & 7.3                                 & 21.5                                & \textbf{}                                    & \textbf{2.0}                        & 8.6                                 & 30.7                                \\
\rowcolor[HTML]{FFFFFF} 
DUO-FM\cite{donati2022deep}                                               & \cellcolor[HTML]{FFFFFF} & 2.5                                 & 4.2                                 & 6.4                                 &                                              & 2.7                                 & 2.8                                 & 8.4 \\
\rowcolor[HTML]{E7E6E6} 
\textbf{Ours}                                          & \multirow{-7}{*}{\cellcolor[HTML]{E7E6E6}unsupervised}                               & 2.3                        & \textbf{2.6}                                 & \textbf{3.8}                                 & \textbf{}                                    & 2.4                        & 2.5                                 & \textbf{4.5}                               \\
\rowcolor[HTML]{E7E6E6}  
\textbf{Ours (80 dim)}                                    & \cellcolor[HTML]{E7E6E6}                                              & {\textbf{\textcolor{blue}{1.7}}}                                 & \textbf{2.6}                        & {\textbf{\textcolor{blue}{5.5}}}                        &                                              & {\textbf{\textcolor{blue}{2.2}}}                                 & \textbf{2.0} & {\textbf{\textcolor{blue}{5.8}}}                         \\ \hline
\end{tabular}\vspace{-0.5em}}
\end{table*}
\section{Experimental Results}\label{sec:exp}

In this section, we conduct an extensive set of experiments of non-rigid shape matching on various datasets including humanoids and animals. We test on both near-isometric and non-isometric shape pairs. Our method is compared to a set of competitive baselines including axiomatic, supervised, weakly-supervised, and unsupervised learning methods. 
We emphasize that in this section, all the maps from the learning-based pipelines are directly inferred from the trained models, \textbf{without} any post-processing procedure. 
We evaluate the matching results in terms of a mean geodesic error on shapes normalized to unit area. 
Finally, our point-wise maps are all inferred by converting the output functional maps, as all the other DFM frameworks. 

\subsection{Datasets}\label{sec:data}
\noindent\textbf{FAUST\_r:} The remeshed version~\cite{BCICP} of FAUST dataset\cite{bogo2014} contains 100 human shapes. Following~\cite{unsuperise_fmap}, it is split into 80/20 for training and testing. 

\noindent\textbf{SCAPE\_r:} The remeshed version~\cite{BCICP} of SCAPE dataset\cite{scape} contains 71 human shapes. Following~\cite{unsuperise_fmap}, it is split into 51/20 for training and testing. 

\noindent\textbf{SHREC19\_r:} The remeshed version of SHREC19 dataset~\cite{melzi2019shrec} collects 44 human shapes from 11 independent datasets with distinctive poses and styles. We abandon shape $40$ due to its partiality, we test on $407$ pairs among the rest $43$ shapes, which come with ground-truth.

\noindent\textbf{DT4D-H~\cite{3dvdata}:} The remeshed subset of the large scale animation dataset DeformingThings4D~\cite{li20214dcomplete}. 
In particular, DT4D-H includes 10 categories of humanoid shapes undergoing significant pose and style variances, forming a challenging benchmark. 

\noindent\textbf{SMAL\_r:} The remeshed SMAL dataset~\cite{zuffi20173d} contains 49 animal shapes with 8 species. We follow the setting from~\cite{li2022attentivefmaps}, which splits 29 (5 species) and 20 (3 species) shapes for training and testing. 

\noindent\textbf{TOSCA\_r:} The remeshed TOSCA dataset~\cite{bronstein2008numerical} contains multiple shape categories. We choose $4$ animal categories, including \textit{cat, dog, horse}, and \textit{wolf} to verify the generalization performance of networks trained on SMAL\_r. Note that we only infer the intra-category maps, due to the absence of ground-truth inter-category maps. 

We refer readers to Supp. Material for visualizations illustrating the variability of the above datasets. 

\subsection{Near-isometric Shape Matching}\label{sec:iso}
In this part, we perform comparisons with an array of non-rigid shape matching methods: (1) Axiomatic methods including ZoomOut~\cite{zoomout}, BCICP~\cite{BCICP}, IsoMuSh~\cite{gao2021isometric}, Smooth Shells~\cite{smoothshells}, CZO~\cite{huang2020consistent}; (2) Supervised learning methods including 
% FMNet~\cite{litany2017deep}, 3D-CODED~\cite{groueix20183d}, HSN~\cite{wiersma2020cnns}, ACSCNN~\cite{li2020shape}, 
TransMatch~\cite{trappolini2021shape}, GeomFMaps~\cite{donati20}, and supervised version of AttentiveFMaps~\cite{li2022attentivefmaps}; (3) Unsupervised learning methods including 
% SURFMNet~\cite{unsuperise_fmap}, UnsupFMNet~\cite{halimi2018self}, WSupFMnet~\cite{sharma20}, 
NeuroMorph~\cite{eisenberger2021neuromorph}, SyNoRiM~\cite{multi}, Deep Shell~\cite{eisenberger2020deep}, AttentiveFMaps~\cite{li2022attentivefmaps}, UDMSM~\cite{cao2022}, DUO-FM~\cite{donati2022deep}.

For all learning-based methods, we train models on \textbf{FAUST\_r} and \textbf{SCAPE\_r} respectively. Tab.~\ref{table:1} reports results on both standard tests and more challenging generalizations. 
We observe a trade-off between the two tasks, methods that performs the best in the former (e.g., supervised AttentiveFMaps and UDMSM) tend to overfit, and therefore suffer poor generalization (especially to \textbf{SREHC19\_r}). Meanwhile, our default setting, denoted by \textbf{Ours}, achieves reasonable performance in the standard tests but also outperforms the external baselines in 3 out of 4 generalization tests. Especially, in generalizing to \textbf{SHREC19\_r}, \textbf{Ours} outperforms the \emph{external} baselines by a large margin, resulting in $41\%$ (3.8 vs. 6.4) and $46\%$ (4.5 vs. 8.4) error reduction upon the second best.

We highlight that post-processing with cycle consistency generally depends on the initialized map quality and the size of the test set (e.g., $\geq 3$ shapes). In contrast, we leverage cycle consistency to improve the feature extractor during training. We also report the results from post-processing techniques based on cycle consistency~\cite{gao2021isometric, huang2020consistent, multi} in Tab.~\ref{table:1}. They are significantly outperformed by our method, which
is inferred per-pair and without any post-processing.

We further augment the dimension of functional maps in network training to $80$ (same as UDMSM), which is beneficial to near-isometric matching~\cite{li2022attentivefmaps}. It is evident that \textbf{Ours (80 dim)} achieves on-par performance with the regarding state-of-the-art methods in standard tests. On the other hand, due to the significant variability between \textbf{SHREC19\_r} and the training sets (see Supp. Material), augmenting dimension leads to worse generalization than before (5.5 vs. 3.8, 5.8 vs. 4.5). Nevertheless, even in this case, our method outperforms the \emph{external} baselines in \emph{all} generalization tests by a notable margin.

\begin{table}[]
\caption{Mean geodesic errors (×100) on SMAL\_r. The \textbf{best} and the \textbf{\textcolor{blue}{second best}} are highlighted correspondingly. }\label{table:2}
\centering
\begin{tabular}{cccc}
\hline
\rowcolor[HTML]{FFFFFF} 
\cellcolor[HTML]{FFFFFF}                         & Train                                      & \multicolumn{2}{c}{\cellcolor[HTML]{FFFFFF}SMAL\_r}              \\ \cline{2-4} 
\rowcolor[HTML]{FFFFFF} 
\multirow{-2}{*}{\cellcolor[HTML]{FFFFFF}Method} & Test                                       & SMAL\_r                             & TOSCA\_r                               \\ \hline
\rowcolor[HTML]{FFFFFF} 
% ZM\cite{zoomout}                                                &                                            & 35.0                                & \textbackslash{}                    \\
\rowcolor[HTML]{FFFFFF} 
% BCICP\cite{BCICP}                                             &                                            & 19.0                                & \textbackslash{}                    \\ \hline
\rowcolor[HTML]{FFFFFF} 
DeepShell\cite{eisenberger2020deep}                                         & \cellcolor[HTML]{FFFFFF}                   & 29.3                                & {\textbf{\textcolor{blue}{8.7}}} \\
\rowcolor[HTML]{FFFFFF} 
GeomFMaps\cite{donati20}                                               & \multirow{-4}{*}{\cellcolor[HTML]{FFFFFF}} & 7.6                                & 24.5                                \\
\rowcolor[HTML]{FFFFFF} 
AttentiveFMaps\cite{li2022attentivefmaps}                                   & \cellcolor[HTML]{FFFFFF}                   & \textbf{5.4} & 20.9                                \\
\rowcolor[HTML]{FFFFFF} 
UDMSM\cite{cao2022}                                            & \cellcolor[HTML]{FFFFFF}                   & 24.6                                & 21.7                                \\
\rowcolor[HTML]{FFFFFF} 
DUO-FM\cite{donati2022deep}                                              & \multirow{-4}{*}{\cellcolor[HTML]{FFFFFF}} & 32.8                                & 15.3                                \\
\rowcolor[HTML]{E7E6E6} 
\textbf{Ours}                                    & \textbf{}                                  & \textbf{5.4}                        & \textbf{7.9}    \\ \hline                   
\end{tabular}
\end{table}
\begin{table*}[htbp]
\caption{Mean geodesic errors (×100) on DT4D-H. The \textbf{best} and the \textbf{\textcolor{blue}{second best}} are highlighted correspondingly. }\label{table:3}
\centering
\setlength{\tabcolsep}{15pt}
\resizebox{\textwidth}{14mm}{
\begin{tabular}{ccccc|ccc}
\hline
\rowcolor[HTML]{FFFFFF} 
\cellcolor[HTML]{FFFFFF}                         & Train                                      & \multicolumn{3}{c}{\cellcolor[HTML]{FFFFFF}DT4D-H (168)}                                                         & \multicolumn{3}{c}{\cellcolor[HTML]{FFFFFF}DT4D-H (80)}                                                          \\ \cline{3-5} \cline{6-8} 
\rowcolor[HTML]{FFFFFF} 
\multirow{-2}{*}{\cellcolor[HTML]{FFFFFF}Method} & Test                                       & DT4D-H                               & FAUST\_r                            & SCAPE\_r                                                                  & DT4D-H                               & FAUST\_r                            & SCAPE\_r                            \\ \hline
\rowcolor[HTML]{FFFFFF} 
DeepShell\cite{eisenberger2020deep}                                         & \cellcolor[HTML]{FFFFFF}                   & 27.0                                 & 4.9                                 & 6.5                                                                  & 29.3                                 & 4.7                                 & 7.0                                 \\
\rowcolor[HTML]{FFFFFF} 
AttentiveFMaps\cite{li2022attentivefmaps}                                   & \multirow{-2}{*}{\cellcolor[HTML]{FFFFFF}} & 25.7                                 & \textbf{\textcolor{blue}{3.4}} & \textbf{\textcolor{blue}{6.4}}                                      & 28.9                                 & \textbf{\textcolor{blue}{2.7}} & \textbf{\textcolor{blue}{6.3}} \\
\rowcolor[HTML]{FFFFFF} 
UDMSM\cite{cao2022}                                            &                                            & 46.8                                 & 43.3                                & 47.9                                                                    & 49.7                                 & 42.5                                & 40.0                                \\
\rowcolor[HTML]{FFFFFF} 
DUO-FM\cite{donati2022deep}                                              &                                            & \textbf{\textcolor{blue}{22.4}} & {\color[HTML]{000000} 10.0}         & {\color[HTML]{000000} 12.2}                                                & \textbf{\textcolor{blue}{24.7}} & {\color[HTML]{000000} 8.0}          & {\color[HTML]{000000} 9.2}          \\
\rowcolor[HTML]{E7E6E6} 
\textbf{Ours}                                    & \textbf{}                                  & \textbf{7.7}                       &\textbf{3.1}                        & \textbf{6.1}                                                               & \textbf{9.0}                         & \textbf{2.6}                        & \textbf{6.2}                        \\ \hline
\end{tabular}}
\vspace{-0.5em}
\end{table*}

\subsection{Non-isometric Shape Matching}\label{sec:noniso}

We also train our network on non-isometric datasets, \textbf{SMAL\_r} and \textbf{DT4D-H}, and compare it with the state-of-the-art baselines including DeepShells~\cite{eisenberger2020deep}, AttentiveFMaps~\cite{li2022attentivefmaps}, UDMSM~\cite{cao2022} and DUO-FM~\cite{donati2022deep}. 
 
\vspace{0.5em}
\noindent\textbf{SMAL\_r}:
We follow the split and input descriptors from~\cite{li2022attentivefmaps} (more details are provided in the Supp. Material). Tab.~\ref{table:2} reports results on \textbf{SMAL\_r}, our method achieves the best performance, which is on-par with AttentiveFMaps~\cite{li2022attentivefmaps}. To evaluate generalization performance, we use the trained models to directly infer intra-category maps within \textbf{TOSCA\_r}. 
It turns out that AttentiveFMaps and GeomFMaps both suffer from significant performance drops ($\times 3.8$ and $\times 3.2$ larger geodesic error). 
It is also worth noting that DeepShells achieves the second-best generalization score in the relatively simpler task. However, it fails dramatically regarding the base task. 

In contrast, our method achieves the best balance between learning in difficult non-isometric pairs and generalizing to relatively easy near-isometric pairs. 

\vspace{0.5em}
\noindent\textbf{DT4D-H}: We follow the train/test (198/95) split of~\cite{li2022attentivefmaps}, but ignore the categories \emph{mousey} and \emph{ortiz} in both train and test, due to the lack of inter-category map labels regarding them, resulting a split of 168/80. We emphasize that we conduct training and test in a \emph{category-agnostic} manner, i.e., no class label is used, and the training pairs can consist of shapes from \emph{arbitrary} two categories. This is significantly different from~\cite{li2022attentivefmaps}, in which training pairs are selected according to clustering information. Obviously, our setting is more practical, but also more challenging. For completeness, we report results under the setting of~\cite{li2022attentivefmaps} in Supp. Material and our method outperforms the baselines in both intra- and inter-category evaluation by a notable margin. 

Tab.~\ref{table:3} reports results on \textbf{DT4D-H}, in which we preserve $80$ shapes for test and train networks with $168$ and $80$ shapes, respectively. 
Note that we report mean geodesic errors over \emph{all} possible test shape pairs, which may undergo significant distortions (see, e.g., Fig.~\ref{fig:teaser}). Our method obtains a $67.2\% (7.7 \mbox{vs.} 22.4)$ geodesic error reduction with respect to the second-best baseline. 
On top of that, we also test the generalization of the trained model on near-isometric benchmarks -- our method also generalizes the best in generalization to \textbf{FAUST\_r} and \textbf{SCAPE\_r}.
The same pattern is observed when the training set is reduced by more than half. Remarkably, our network trained on the reduced set still outperforms all the baselines trained on the full set.  
\begin{figure}[t!]
    \centering
    \includegraphics[width=8.5cm]{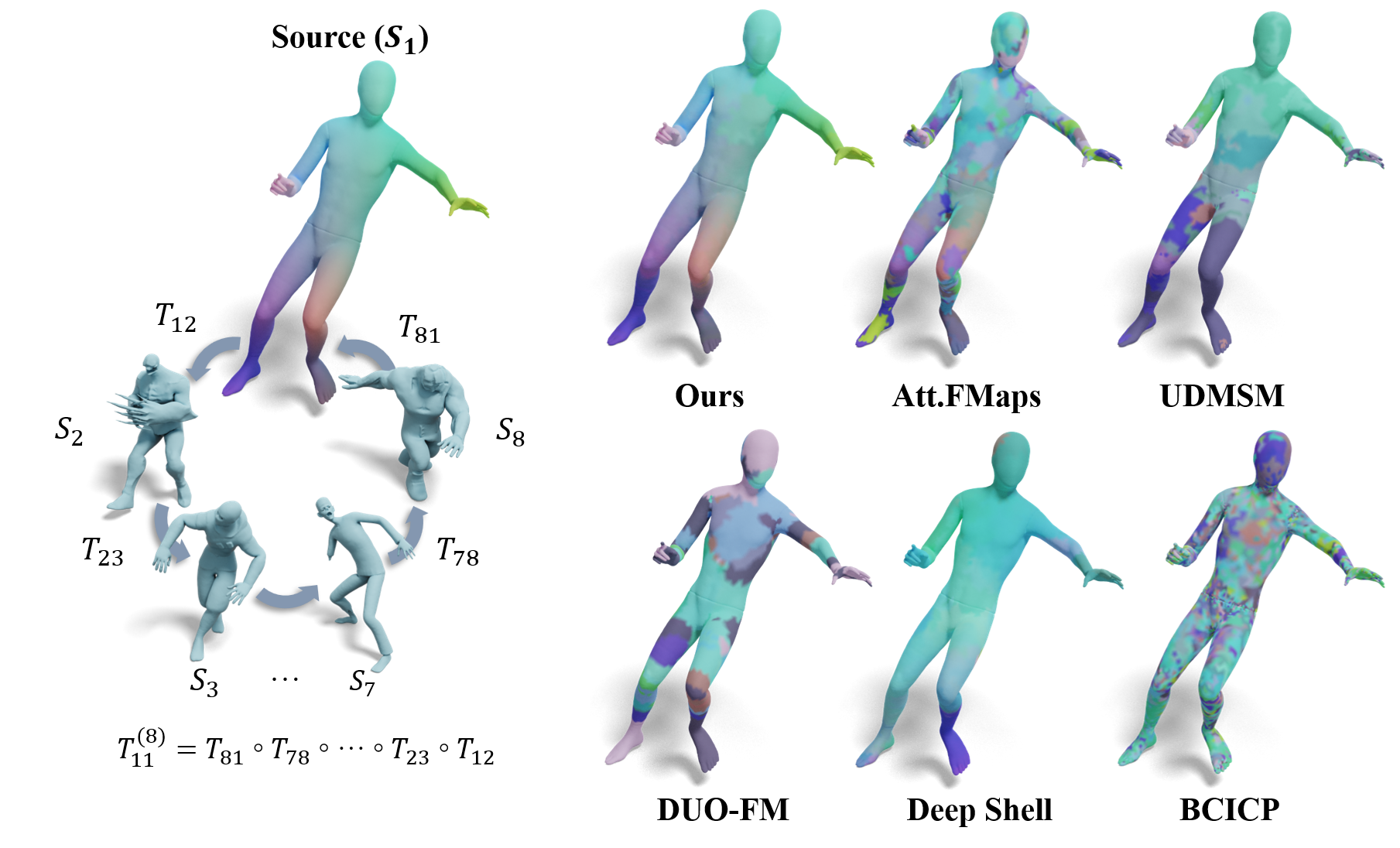}
    \caption{Qualitative evaluation of spatial cycle consistency of different methods. Even composed along a path of 8 highly deformed shapes, our resulting map remains close to identity, while all the baselines fail significantly.}\vspace{-1.5em}
    \label{fig:cycle consistency}
\end{figure}

Overall, we attribute our performance on matching challenging non-isometric shapes (Tab.~\ref{table:3}) and generalizing to unseen shapes (Tab.~\ref{table:1}) to our effort to promote both spectral and spatial cycle consistency. Especially, the isometry assumption is likely violated in the former case, thus cycle consistency, as a generic prior, plays an important role of regularizing maps. 

As an illustration, we present a qualitative evaluation on the point-wise cycle consistency in Fig.~\ref{fig:cycle consistency}. We sample $8$ shapes from the test set of DT4D-H (one from each category) and compose the maps along the path $(S_1\rightarrow S_2 \rightarrow\cdots \rightarrow S_8 \rightarrow S_1)$ with respect to different approaches. 
It is evident that due to the significant distortion undergoing among the shapes, all but our method fail to preserve cycle consistency in this demanding test, while our composing map approximates the identity map on $S_1$. It also aligns nicely with the quantitative results reported in Tab.~\ref{table:3}. 

\begin{table}[t!]
\caption{Mean geodesic errors (×100) of SURFMNet and our variant trained on $4$ datasets}\label{table:4}
\centering
\begin{tabular}{
>{\columncolor[HTML]{FFFFFF}}c 
>{\columncolor[HTML]{FFFFFF}}c 
>{\columncolor[HTML]{E7E6E6}}c }
\hline
Method   & SURFMNet & SURFMNet + Ours \\ \hline
FAUST\_r & 6.0      & \textbf{3.5}    \\
SCAPE\_r & 6.8      & \textbf{3.4}    \\
SMAL\_r  & 20.4     & \textbf{13.3}   \\
DT4D-H     & 18.3     & \textbf{15.0}   \\ \hline
\end{tabular}
\end{table}

\subsection{Integration with SURFMNet~\cite{unsuperise_fmap}}\label{sec:plugin}
Our two-branch design can be easily incorporated into any existing DFM framework following the general design outlined in Sec.~\ref{sec:dfm}. 
To demonstrate this, we modify the SURFMNet~\cite{unsuperise_fmap}, one of the earliest approaches of unsupervised DFM, by adding our new branch. 
Tab.~\ref{table:4} shows the matching accuracy on the four main benchmarks. 
It is evident that in every case, incorporating our design leads to significant error reduction ranging from $18\%$ to $50\%$ . 
Especially, in the near-isometric cases, we obtain $41.6\%$ and $50\%$ error reduction respectively. Note that the absolute scores, $0.035, 0.034$, are reasonable even compared to the state-of-the-art results reported in Tab.~\ref{table:1}. 

\subsection{Ablation Study}\label{sec:ablation}
In this section, we present a set of ablation studies consisting of two parts.  The first part verifies the rationality of our method, and the second part demonstrates the robustness of our method. We conduct all experiments on SMAL\_r dataset~\cite{zuffi20173d}. 

First of all, instead of using the updating scheme in Sec.~\ref{sec:alpha}, we test the performance of our pipeline using two fixed values of $\alpha$ in Eqn.~\eqref{eqn:pi}: $\alpha = 1$ and $\alpha = 50$. 
Compared to our proposed model, the two variants yield a noticeable performance drop. Especially in the case $\alpha = 50$, the network fails to deliver reasonable matching results. We believe it is because a large $\alpha$ amplifies the noise of maps learned at the early training stage. 

Then we justify our two-branch network design. Removing spatial branch amounts to training a standard single-branch DFM. To remove the spectral branch, we remove the \emph{FMreg} layer and instead use our new branch to compute point-wise maps and convert them to functional maps. In the end, we modify the training loss so that it covers descriptor preservation, commutativity with the Laplace-Beltrami operator, and orthogonality (the latter two compensate the removed \emph{FMreg} layer). The accuracy drop reported in the third and fourth row of Tab.~\ref{table:5} clearly suggests the necessity of our two-branch design. 

In our experiments, we always use the full resolution meshes ($\sim 5k$ vertices) and compute in Eqn.~\eqref{eqn:delta} with all of the $128$ descriptors. 
We anticipate that efficiency can become an issue when the input mesh resolution is high, and/or we would like to increase the size of learned descriptors. 
Therefore, we test the robustness of our pipeline with respect to down-sampling, which is commonly used in functional maps-based frameworks~\cite{zoomout, li2022attentivefmaps}: 
1) We down-sample $3000$ vertices on each shape via furthest point sampling; 2) In order to down-sample the feature dimension, we operate as the following during training: given a $\A_1, \A_2$, we perform SVD on $\A_1$, i.e., $\A_1 = U_1\Sigma_1 V_1^T$, then we set $\hat{\A}_1 = \A_1 \hat{V}_1$, and set $\hat{\A}_2 = \A_2 \hat{V}_1$, where $\hat{V}_1$ is the first $m$ columns of $V_1$. We set $m = 30$ by replacing $\A_i$ with $\hat{\A}_i$ in Eqn.~\eqref{eqn:delta}.
The results in the bottom two rows show that the above operation has a relatively minor effect on the performance, proving the robustness of our method. 

\begin{table}[t!]
\caption{Mean geodesic errors (×100) on different ablated settings, the models are all train on \textbf{SMAL\_r}.}\label{table:5}
\centering
\begin{tabular}{
>{\columncolor[HTML]{FFFFFF}}c 
>{\columncolor[HTML]{FFFFFF}}c }
\hline
Ours                            & \textbf{5.4}  \\ \hline
$\alpha$ = 1                         & 6.6  \\
$\alpha$ = 50                        & 35.2 \\
remove spatial branch                       & 33.4 \\
remove spectral branch                      & 14.3 \\ \hline
Vertex down-sampling    & 5.7  \\
Feature down-sampling & 5.9  \\ \hline
\end{tabular}
% \vspace{-1em}
\end{table}

\section{Conclusion}

To conclude, we provide a theoretical justification for spectral cycle consistency in DFM. To compensate for the discrepancy of purely spectral cycle consistency, we formulate a spectral latent shape that allows the alignment of the spectral embeddings of each shape to a canonical embedding domain. Based on this construction, we introduce a two-branch architecture for estimating maps in both spectral and spatial domains within DFM. The resulting network is simple, computationally efficient, and compatible with most existing DFM frameworks. We demonstrate the effectiveness of our framework through a comprehensive set of experiments, showing significant improvements upon state-of-the-art approaches in terms of accuracy, consistency, and generalization performance.

It is worth noting, though, our method is developed upon clean and complete meshes. It would be interesting for future work to investigate the utility of our approach in more general tasks, involving partial meshes, noisy point clouds, and other representations.

\paragraph*{Acknowledgement} This work was supported in part by the National Natural Science
Foundation of China under contract No. 62171256 and Shenzhen Key Laboratory of next-generation interactive media
innovative technology (No. ZDSYS20210623092001004), and in part by the ERC Starting Grant No. 758800 (EXPROTEA) and the ANR AI Chair AIGRETTE. The authors also thank Abhishek Sharma for his input and discussion in the early development of this project.

{\small

}

\clearpage
In this supplementary material, we start by proving Proposition 1 in the main submission in Sec.~\ref{sec:sec1}. We then show data variability in our experiments in Sec.~\ref{sec:sec2}. Sec.~\ref{sec:sec3} clarifies the annotation preparation regarding \textbf{DT4D-H}. Finally, in Sec.~\ref{sec:sec4}, more experimental results and implementation details are provided. 

\section{Proof of Proposition 1}\label{sec:sec1}
Recall that deep functional maps are trained on $\mathcal{S}$ with respect to all possible pairs. Then the global energy is given by $E_{\mbox{total}}(\mathcal{C}) = E_{\mbox{desc}}(\mathcal{C})+E_{\mbox{reg}}(\mathcal{C}) =  \sum_{i, j} \Vert \C_{ij} \A_i - \A_j \Vert^2 + \sum_{i, j} E_{\mbox{reg}}(\C_{ij})$, where $\mathcal{C} = \{\C_{ij}\}_{i, j\in [1..n]}$ is the set of functional maps among training shapes. We restate Proposition 1 in the main submission as follows:

\begin{prop}
	\label{prop:main}
	 If $E_{\mbox{total}}(\mathcal{C}) = 0$, then for any shape $S_i$, and any path $(i, i_1, i_2, \cdots, i_k, i)$, the map composition $\C_{ii}$ is cycle consistent within the functional space spanned by the columns of $\A_i$, i.e., $\C_{ii} \A_i = \A_i$. 
\end{prop} 

\begin{proof}
It is obvious that $E_{\mbox{total}}(\mathcal{C}) = 0$ implies $E_{\mbox{desc}}(\mathcal{C}) = 0$. In the following, we show the case of the path of length 3 -- $(i, j, k, i)$. The general case follows easily. Setting $\C_{ii} = \C_{ki} \C_{jk} \C_{ij}$, we get:
\begin{equation}\label{eqn:proof}
	\C_{ii} \A_i = \C_{ki} \C_{jk} \C_{ij} \A_i = \C_{ki} \C_{jk} \A_j = \C_{ki} \A_k = \A_i.
\end{equation}
The equities in Eqn.~\eqref{eqn:proof} follow from the fact $\Vert \C_{ij} \A_i - \A_j\Vert = 0, \forall i, j$, since $E_{\mbox{desc}}(\mathcal{C}) = 0$.
\end{proof}

\begin{figure}[t!]
    \centering
    \includegraphics[width=8.5cm]{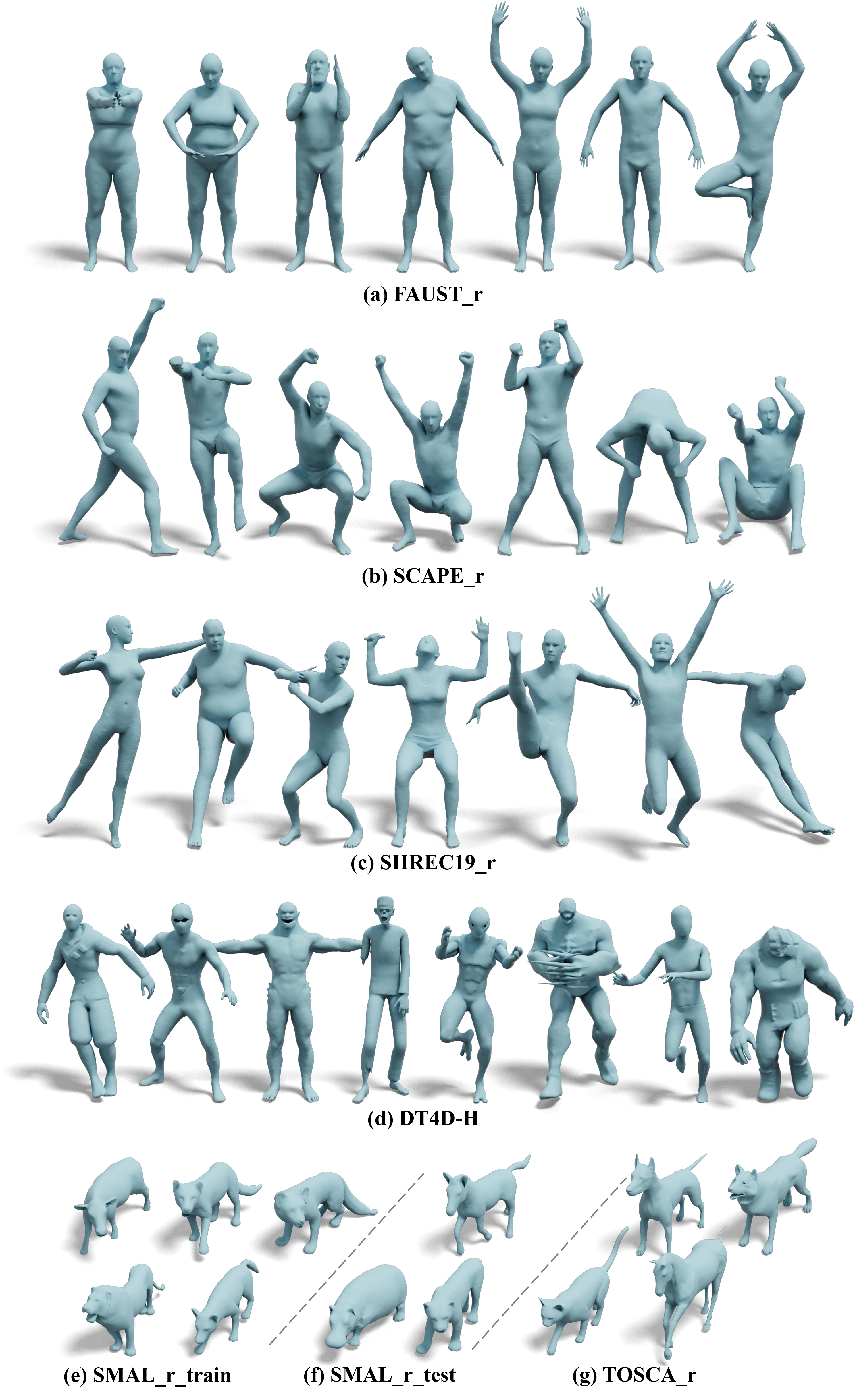}
    \caption{(a) part of the fixed poses from different individuals in \textbf{FAUST\_r}; (b) part of the different poses in \textbf{SCAPE\_r}; (c) shapes in \textbf{SHREC19\_r}; (d) 8 categories of humanoid shapes in \textbf{DT4D-H}; (e) 5 categories of animals used in training; (f) 3 categories of animals used in test; (g) test animals from \textbf{TOSCA\_r}. }
    \label{fig:dataset}
\end{figure}

\section{Data Variability}\label{sec:sec2}
In the main submission, we highlight our generalization performance. To give a hint of the distinctiveness among the involved datasets, we visualize a subset of each of them in Fig.~\ref{fig:dataset}. 
The first four rows show shapes from the humanoid datasets. \textbf{FAUST\_r} (a) consists of 10 different people with 10 fixed poses. \textbf{SCAPE\_r} (b) shows more significant pose variability but is of the same character. It is worth noting that, \textbf{SHREC19\_r} (c) manifests larger variability in \emph{both} styles and poses when compared to the above two. 
Furthermore, \textbf{DT4D-H} (d) is a new challenging dataset consisting of distinctive humanoid categories, in which the inter-class maps are highly non-isometric, especially when compared to the aforementioned datasets. 

There are $8$ species of animals in \textbf{SMAL\_r}. Following~\cite{li2022attentivefmaps}, we use 5 of them during training and the rest for testing. As shown in Fig.~\ref{fig:dataset} (e) and (f), we observe obvious differences between them, rendering the difficulty of the task. In addition, the $31$ animal shapes from \textbf{TOSCA\_r} (g) fall into 4 categories and also demonstrate noticeable differences from the training set of \textbf{SMAL\_r}.

\section{Label Preparation in \textbf{DT4D-H}}\label{sec:sec3}

Note that the inter-class correspondence annotations from \textbf{DT4D-H} are only available between category \emph{crypto} and the other 7 categories. 
In order to train and test on this benchmark in a \emph{category-agnostic} manner, we compute an inter map between two shapes, $S_1, S_2$, from  two categories other than \emph{crypto}, with the following composition: 
\[T_{12} = T_{c2} \circ T_{1c},\]
where $T_{c2}, T_{c1}$ are the annotated inter-class maps regarding the center category, \emph{crypto}. Note again, we exclude categories \emph{mousey} and \emph{ortiz} in the experimental setting reported in the main submission, simply due to their lack of inter-class correspondence annotation with respect to the center category.

However, empirically we observe that certain noise in the original annotation is amplified through the above composition, leading to a small portion of erroneous labels. 
To alleviate such discrepancy for better evaluation, we filter the composed correspondences as follows: Given composed maps $T_{12}, T_{21}$, we further compose them to obtain self maps on $S_1$ and $S_2$, respectively. That is, $T_{11} = T_{21}\circ T_{12}, T_{22} = T_{12}\circ T_{21}$. Then, we evaluate per-vertex Euclidean errors of the self-maps with respect to the ground truth identity maps. Finally, we filter out all the annotated correspondences involving vertices such that $\Vert T_{ii} (p) - p\Vert > 0.1$ (all shapes are normalized to unit total area). 

In Fig.~\ref{fig:label}, we visualize the correspondences before and after our post-processing. We remark that the ground-truth annotations are not dense. That is, there exists a portion of vertices on one shape corresponding to no vertex on the other, which is indicated by the black color in the transferred texture. As illustrated within the circles of the zoom-in regions, our post-processing manages to remove the wrongly mapped points (see the discontinuous purple regions at the top). As a result, the removed region is now in no correspondence (see the black region at the bottom). On average, about $1\%$ of the points in the original annotation are filtered out.

\begin{figure}[t!]
    \centering
    \includegraphics[width=8cm]{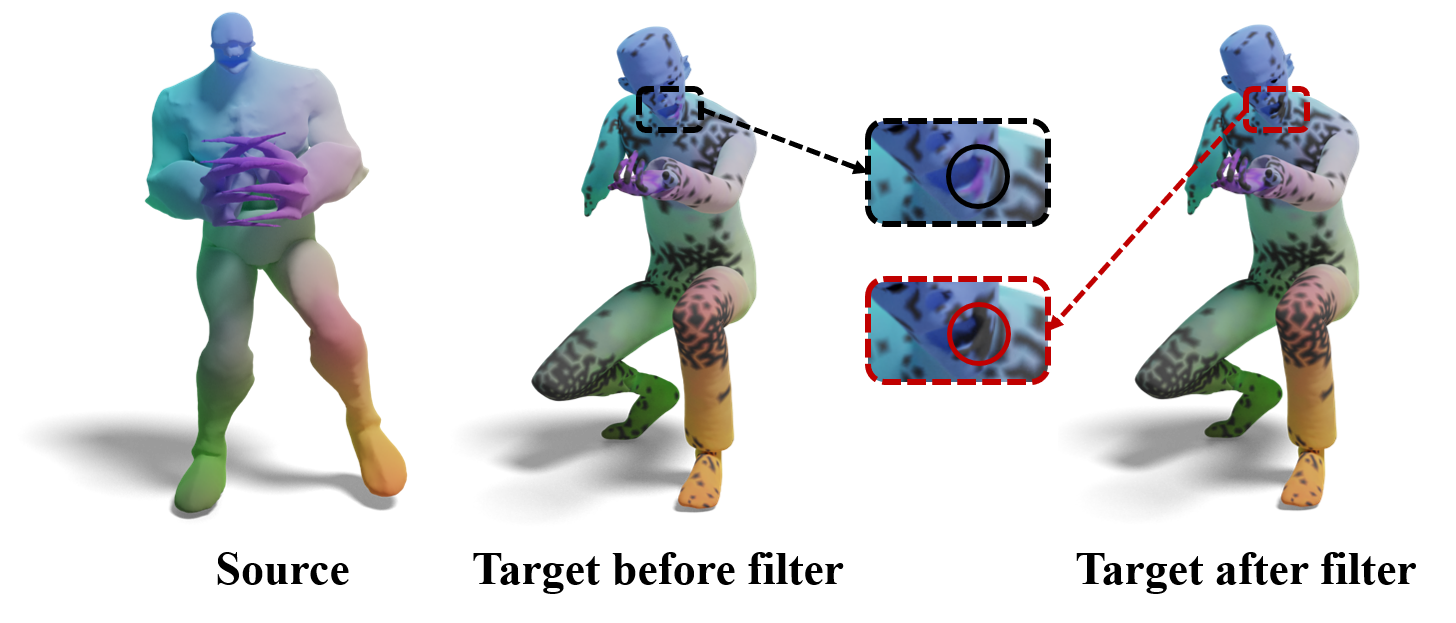}
    \caption{We filter out the erroneous correspondences via consistency prior. See the text for details.}
    \label{fig:label}
\end{figure}

\section{More Experimental Results and Details}\label{sec:sec4}
In this section, we provide not only implementation details, but also more experimental results, both quantitatively and qualitatively, to further clarify and support our claims made in the main submission. 

\subsection{Single Pair Fine-tune}
In this section, we perform a challenging fine-tuning task to test our method and two state-of-the-art unsupervised  methods -- AttentiveFMaps~\cite{li2022attentivefmaps} and UDMSM~\cite{cao2022}. 

We select two pairs of non-isometric animals from \textbf{TOSCA\_r}. Then we use the weights trained on the \textbf{human} dataset \textbf{FAUST\_r} from our experiments as initialization and perform fine-tuning on the selected \textbf{animal} pairs. 
All methods are optimized for $100$ epochs over the given pair. 
The qualitative comparisons are shown in Fig.~\ref{fig:finetune}. Note that our method is the only one that leads to good maps, by which the grid texture (e.g., on the torsos) is well-preserved.

\begin{figure}[t!]
    \centering
    \includegraphics[width=8.5cm]{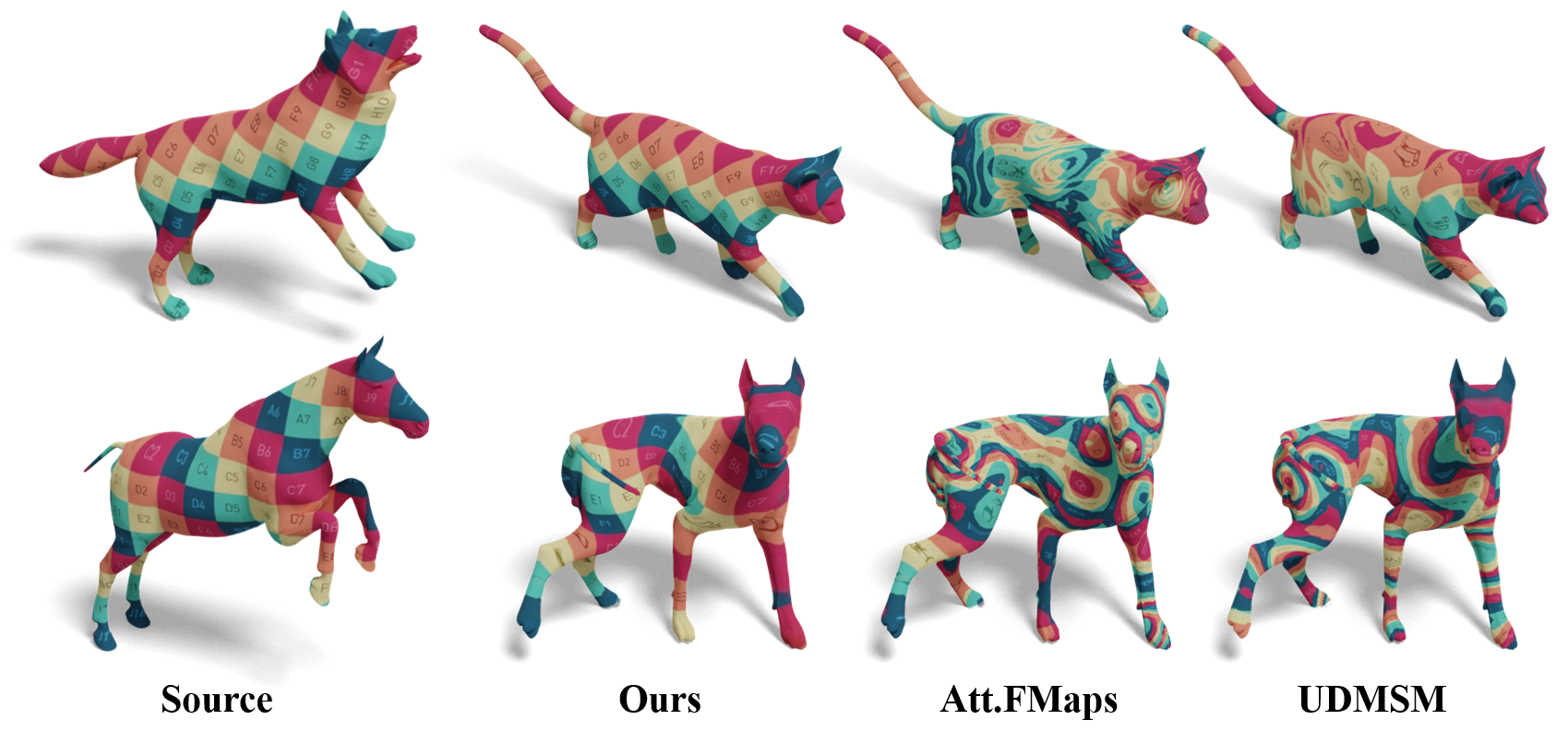}
    \caption{We perform unsupervised fine-tune on two pairs of non-isometric animals with weights initialized from models trained on \textbf{FAUST\_r}. Our results clearly outperform the competing methods. }
    \label{fig:finetune}
\end{figure}

\begin{table*}[htbp]
\caption{Mean geodesic errors (×100) on \textbf{FAUST\_r}, 
\textbf{SCAPE\_r}, and \textbf{SHREC19\_r}. The \textbf{best} and the \textbf{\textcolor{blue}{second best}} are highlighted correspondingly. }\label{table:5}
\centering
\setlength{\tabcolsep}{11pt}
\resizebox{\textwidth}{48mm}{
\begin{tabular}{ccccccccc}
\hline
\rowcolor[HTML]{FFFFFF} 
\cellcolor[HTML]{FFFFFF}                         & Train                                                  & \multicolumn{3}{c}{\cellcolor[HTML]{FFFFFF}FAUST\_r}                                                            & \multicolumn{1}{l}{\cellcolor[HTML]{FFFFFF}} & \multicolumn{3}{c}{\cellcolor[HTML]{FFFFFF}SCAPE\_r}                                                            \\ \cline{3-5} \cline{7-9} 
\rowcolor[HTML]{FFFFFF} 
\multirow{-2}{*}{\cellcolor[HTML]{FFFFFF}Method} & Test                                                   & FAUST\_r                            & SCAPE\_r                            & SHREC19\_r                             &                                              & SCAPE\_r                            & FAUST\_r                            & SHREC19\_r                             \\ \hline
\rowcolor[HTML]{FFFFFF} 
ZM\cite{zoomout}                                                &                                                        & 6.1                                 & \textbackslash{}                    & \textbackslash{}                    &                                              & 7.5                                 & \textbackslash{}                    & \textbackslash{}                    \\
\rowcolor[HTML]{FFFFFF} 
BCICP\cite{BCICP}                                             &                                                        & 6.4                                 & \textbackslash{}                    & \textbackslash{}                    &                                              & 11.0                                & \textbackslash{}                    & \textbackslash{}                    \\
\rowcolor[HTML]{FFFFFF} 
IsoMuSh\cite{gao2021isometric}                                             &                                                        & 4.4                                 & \textbackslash{}                    & \textbackslash{}                    &                                              & 5.6                                & \textbackslash{}                    & \textbackslash{}                    \\

\rowcolor[HTML]{FFFFFF} 
Smooth Shell\cite{smoothshells}                                      &                                                        & 2.5                                 & \textbackslash{}                    & \textbackslash{}                    &                                              & 4.7                                 & \textbackslash{}                    & \textbackslash{}                    \\
\rowcolor[HTML]{FFFFFF} 
CZO\cite{huang2020consistent}                                             &                                                        & 2.2                                 & \textbackslash{}                    & \textbackslash{}                    &                                              & 2.5                                & \textbackslash{}                    & \textbackslash{}                    \\ \hline
\rowcolor[HTML]{F5F5F5} 
FMNet\cite{litany2017deep}                                             & \cellcolor[HTML]{F5F5F5}                                & 11.0                                & 30.0                                & \textbackslash{}                    &                                              & 17.0                                & 33.0                                & \textbackslash{}                    \\
\rowcolor[HTML]{F5F5F5} 
3D-CODED\cite{groueix20183d}                                          & \cellcolor[HTML]{F5F5F5}                               & 2.5                                 & 31.0                                & \textbackslash{}                    &                                              & 31.0                                & 33.0                                & \textbackslash{}                    \\
\rowcolor[HTML]{F5F5F5} 
HSN\cite{wiersma2020cnns}                                              & \cellcolor[HTML]{F5F5F5}                                & 3.3                                 & 25.4                                & \textbackslash{}                    &                                              & 3.5                                 & 16.7                                & \textbackslash{}                    \\
\rowcolor[HTML]{F5F5F5} 
ACSCNN\cite{li2020shape}                                           & \cellcolor[HTML]{F5F5F5}                                & 2.7                                 & 8.4                                 & \textbackslash{}                    &                                              & 3.2                                 & 6.0                                 & \textbackslash{}                    \\
\rowcolor[HTML]{FFFFFF} 
TransMatch\cite{trappolini2021shape}                                        & \cellcolor[HTML]{FFFFFF}                               & 2.7                                 & 33.6                                & 21.0                                &                                              & 18.3                                & 18.6                                & 38.8                                \\
\rowcolor[HTML]{FFFFFF} 
GeomFMaps\cite{donati20}                                        & \cellcolor[HTML]{FFFFFF}                               & \textbf{\textcolor{blue}{2.6}}                                 & \textbf{\textcolor{blue}{3.3}}                                & \textbf{\textcolor{blue}{9.9}}                                &                                              & \textbf{\textcolor{blue}{3.0}}                                & \textbf{\textcolor{blue}{3.0}}                                & \textbf{12.2}                                \\
\rowcolor[HTML]{FFFFFF} 
AttentiveFMaps\cite{li2022attentivefmaps}                                            & \multirow{-7}{*}{{\cellcolor[HTML]{FFFFFF}sup}}   & \textbf{1.4 }                               & \textbf{2.2}                                 & \textbf{9.4}                                 &                                              & \textbf{1.7 }                                & \textbf{1.8} & \textbf{12.2}                                \\ \hline
\rowcolor[HTML]{F5F5F5} 
SURFMNet\cite{unsuperise_fmap}                                         & \cellcolor[HTML]{F5F5F5}                                 & 6.0                                & 16.5                                & \textbackslash{}                    &                                              & 6.8                                & 18.5                                & \textbackslash{}                    \\
\rowcolor[HTML]{F5F5F5} 
UnsupFMNet\cite{halimi2018self}                                        & \cellcolor[HTML]{F5F5F5}                                 & 10.0                                & 29.0                                & \textbackslash{}                    &                                              & 16.0                                & 22.0                                & \textbackslash{}                    \\
\rowcolor[HTML]{F5F5F5} 
WSupFMNet\cite{sharma20}                                        & \cellcolor[HTML]{F5F5F5}                                 & 3.3                                 & 11.7                                & \textbackslash{}                    &                                              & 7.3                                 & 6.2                                 & \textbackslash{}                    \\
\rowcolor[HTML]{FFFFFF} 
NeuroMorph\cite{eisenberger2021neuromorph}                                       & \cellcolor[HTML]{FFFFFF}                               & 8.5                                 & 28.5                                & 26.3                                &                                              & 29.9                                & 18.2                                & 27.6                                \\
\rowcolor[HTML]{FFFFFF} 
SyNoRiM\cite{multi}                                       & \cellcolor[HTML]{FFFFFF}                               & 7.9                                 & 21.7                                & 25.5                                &                                              & 9.5                                & 24.6                                & 26.8                                \\
\rowcolor[HTML]{FFFFFF} 
Deep Shell\cite{eisenberger2020deep}                                        & \cellcolor[HTML]{FFFFFF}                               & \textbf{\textcolor{blue}{1.7}} & 5.4                                 & 27.4                                &                                              & 2.5                                 & 2.7                                 & 23.4                                \\
\rowcolor[HTML]{FFFFFF} 
AttentiveFMaps\cite{li2022attentivefmaps}                                   & \cellcolor[HTML]{FFFFFF}                               & 1.9                                 & \textbf{2.6} & 6.4 &                                              & \textbf{\textcolor{blue}{2.2}} & \textbf{\textcolor{blue}{2.2}}                        & 9.9                                 \\
\rowcolor[HTML]{FFFFFF} 
UDMSM\cite{cao2022}                                            & \cellcolor[HTML]{FFFFFF}                               & \textbf{1.5}                        & 7.3                                 & 21.5                                & \textbf{}                                    & \textbf{2.0}                        & 8.6                                 & 30.7                                \\
\rowcolor[HTML]{FFFFFF} 
DUO-FM\cite{donati2022deep}                                               & \cellcolor[HTML]{FFFFFF} & 2.5                                 & 4.2                                 & 6.4                                 &                                              & 2.7                                 & 2.8                                 & 8.4 \\
\rowcolor[HTML]{E7E6E6} 
\textbf{Ours}                                          & \multirow{-8}{*}{{\cellcolor[HTML]{E7E6E6}unsup}}                               & 2.3                        & \textbf{2.6}                                 & \textbf{3.8}                                 & \textbf{}                                    & 2.4                        & 2.5                                 & \textbf{4.5}                               \\
\rowcolor[HTML]{E7E6E6}  
\textbf{Ours (80 dim)}                                    & \cellcolor[HTML]{E7E6E6}                                              & {\textbf{\textcolor{blue}{1.7}}}                                 & \textbf{2.6}                        & {\textbf{\textcolor{blue}{5.5}}}                        &                                              & {\textbf{\textcolor{blue}{2.2}}}                                 & \textbf{2.0} & {\textbf{\textcolor{blue}{5.8}}}                         \\ \hline
\end{tabular}\vspace{-0.5em}}
\end{table*}

\subsection{Additional Baselines on Near-isometric Datasets}
Due to the space limit, we only present the more recent and stronger baselines in Tab.~\ref{table:1} in the main submission. In Tab.~\ref{table:5}, we provide more complete results on near-isometric shape matching. Note that the newly introduced baselines (highlighted in light gray) are in general weaker than the baselines we report in the main submission, therefore their absence does not affect our experimental analysis. 

\begin{figure}[t]
    \centering
    \includegraphics[width=8.5cm]{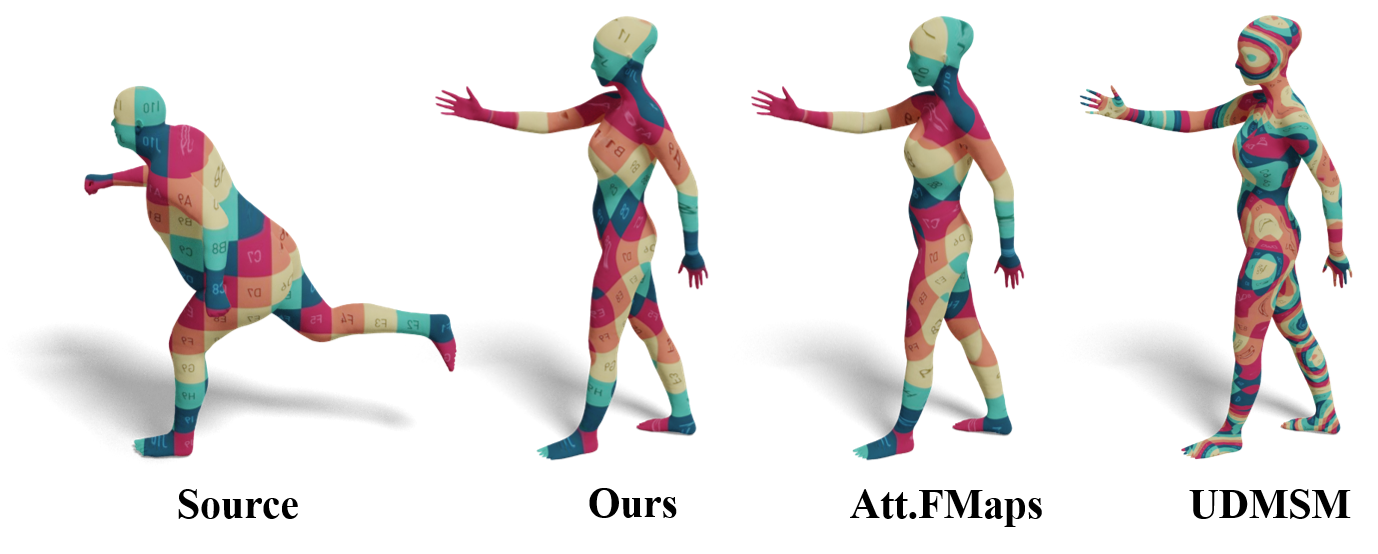}
    \caption{We train models on \textbf{FAUST\_r} and test on \textbf{SHREC19\_r}. }
    \label{fig:sup1}
\end{figure}

In Fig.~\ref{fig:sup1}, we provide qualitative results to demonstrate the generalization power of our method. Specifically, we train models on \textbf{FAUST\_r} and infer a pair of shapes from \textbf{SHREC19\_r}. The qualitative results are consistent with the quantitative results in Tab.~\ref{table:5}.

\subsection{Implementation Details on SMAL\_r} 
In this part, we clarify our experiments setting of \textbf{SMAL\_r} (see Tab.~\ref{table:2} in the main submission). We follow the setting of \cite{li2022attentivefmaps}, where the training and testing data contain 5 and 3 species, respectively. Also following \cite{li2022attentivefmaps}, we use the XYZ signal input augmented with random rotations around the up (or Y) axis as the input signal to the network. The same settings are applied to the baseline GeomFMaps \cite{donati20}. For UDMSM \cite{cao2022} and DeepShell \cite{eisenberger2020deep}, we have implemented the official codes by the regarding authors with \emph{both} SHOT ~\cite{shot} (the common default descriptors) and XYZ as input. And in the end, we select the better output from the two. In fact, both methods work better with SHOT as input. 

In Fig.~\ref{fig:sup2}, we train models on \textbf{SMAL\_r} and test on \textbf{TOSCA\_r}. The qualitative results suggest our better generalization performance, which agrees with the quantitative results reported in Tab.~\ref{table:2} in the main submission.  

\begin{figure}[t]
    \centering
    \includegraphics[width=8.5cm]{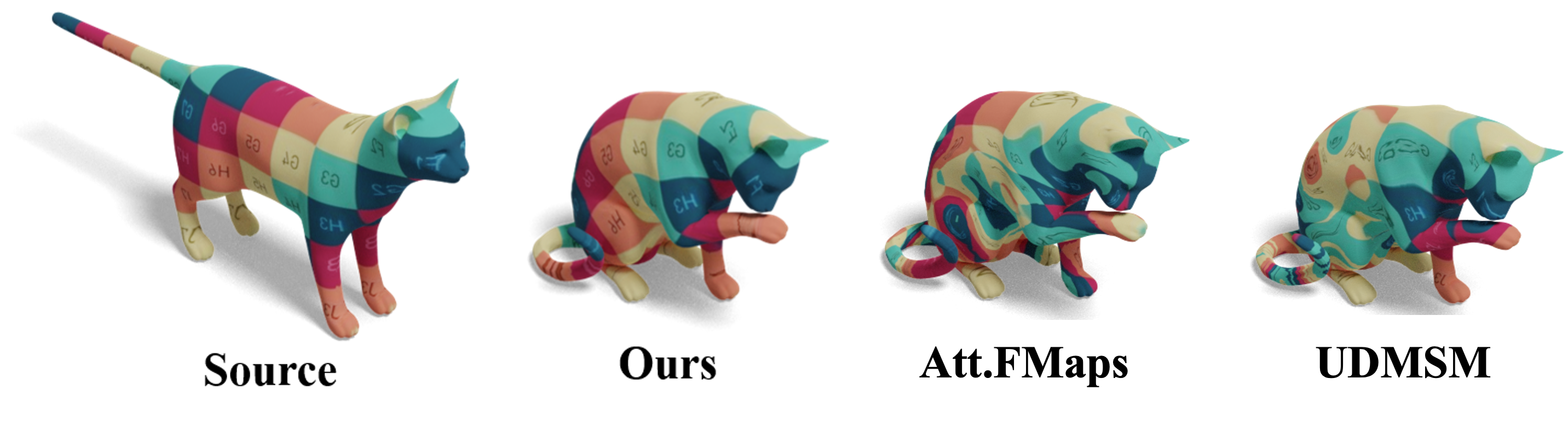}
    \caption{We train models on \textbf{SMAL\_r} and test on \textbf{TOSCA\_r}.}
    \label{fig:sup2}
\end{figure}

\subsection{Implementation details on DT4D-H}

We find in the official code of \cite{li2022attentivefmaps} that the authors \emph{train} and test inter maps with a fixed source category (\emph{crypto}). On the other hand, in the main submission, we advocate a category-agnostic \emph{training} scheme, which is more practical as well as challenging (see Tab.~\ref{table:3} in the main submission for comparison). For the sake of completeness and fairness, we follow the exact experimental settings of AttentiveFMaps to train our model and report the results in the same manner as \cite{li2022attentivefmaps} in Tab.~\ref{table:6}. We outperform \cite{li2022attentivefmaps} by a significant margin ($6.1$ vs. $11.6$ for inter-class maps) in their setting. Remarkably, as an unsupervised method, our inter-class score is even comparable with the baselines with supervision (see the top two rows). 

\begin{table}[t!]
\caption{Mean geodesic errors (×100) on \textbf{DT4D-H} followed AttentiveFMaps. The \textbf{best} and the \textbf{\textcolor{blue}{second best}} are highlighted correspondingly. }\label{table:6}
\centering
\begin{tabular}{cccc}
\hline
\rowcolor[HTML]{FFFFFF} 
\cellcolor[HTML]{FFFFFF}                         &                                                 & \multicolumn{2}{c}{\cellcolor[HTML]{FFFFFF}DT4D}                           \\ \cline{3-4} 
\rowcolor[HTML]{FFFFFF} 
\multirow{-2}{*}{\cellcolor[HTML]{FFFFFF}Method} &                                                 & intra-class                         & inter-class                          \\ \hline
\rowcolor[HTML]{FFFFFF} 
GeomFMaps\cite{donati20}                                          & \cellcolor[HTML]{FFFFFF}                        & 2.1                                 & 4.1                                  \\
\rowcolor[HTML]{FFFFFF} 
AttentiveFMaps\cite{li2022attentivefmaps}                                   & \multirow{-2}{*}{\cellcolor[HTML]{FFFFFF}sup}   & 1.8                                 & 4.6                                  \\ \hline
\rowcolor[HTML]{FFFFFF} 
DeepShell\cite{eisenberger2020deep}                                        & \cellcolor[HTML]{FFFFFF}                        & 3.4                                 & 31.1                                 \\
\rowcolor[HTML]{FFFFFF} 
GeomFMaps\cite{donati20}                                          & \cellcolor[HTML]{FFFFFF}                        & 3.3                                 & 22.6                                 \\
\rowcolor[HTML]{FFFFFF} 
AttentiveFMaps\cite{li2022attentivefmaps}                                   & \multirow{-3}{*}{\cellcolor[HTML]{FFFFFF}unsup} & \textbf{\textcolor{blue}{1.7}} & \textbf{\textcolor{blue}{11.6}} \\
\rowcolor[HTML]{E7E6E6} 
\textbf{Ours}                                    & \textbf{}                                       & \textbf{1.2}                        & \textbf{6.1}                         \\ \hline
\end{tabular}
\vspace{-1em}
\end{table}

\subsection{Implementation Details on Plugin with SURFMNet} We implement our two-branch variant of SURFMNet with PyTorch~\cite{pytorch}. The dimension of the Laplace-Beltrami eigenbasis is set to 40. SHOT~\cite{shot} descriptors are used as the input signal of the network. The dimensions of the input and the output descriptors are both set to 352. During training, the value of the learning rate is set to 1e-3 with ADAM optimizer. In all experiments, we set the batch size to 1. We initialize $\alpha$ to $1$ and increase it by $1$ per epoch. Note that this learning scheme is different from the one we reported in our main submission, where the backbone is DiffusionNet~\cite{diffusionNet}. Here $\alpha$ is augmented slower as the backbone network of SURFMNet~\cite{unsuperise_fmap} is weaker. We keep all the losses used in SURFMNet~\cite{unsuperise_fmap}, and just simply add our proposed new branch, as shown in Fig. \ref{fig:pipeline} in the main submission. 

\end{document}